\def\eqref#1{equation~\ref{#1}}
\def\1{\bm{1}}
\DeclareMathAlphabet{\mathsfit}{\encodingdefault}{\sfdefault}{m}{sl}
\SetMathAlphabet{\mathsfit}{bold}{\encodingdefault}{\sfdefault}{bx}{n}
\newtheorem{theorem}{Theorem}[section]
\newtheorem{proposition}[theorem]{Proposition}
\newtheorem{definition}[theorem]{Definition}
\def\checkmark{\tikz\fill[scale=0.4](0,.35) -- (.25,0) -- (1,.7) -- (.25,.15) -- cycle;} 
\newcommand{\name}{CuSP}
\def\setstretch#1{\renewcommand{\baselinestretch}{#1}}
\title{It Takes Four to Tango: Multiagent Selfplay for Automatic Curriculum Generation}
\author{Yuqing Du \\
UC Berkeley\\
\texttt{\small yuqing\_du@berkeley.edu} \thanks{ Code available at \url{https://github.com/yuqingd/cusp}.} \\
\And
Pieter Abbeel \\
UC Berkeley\\
\texttt{\small pabbeel@berkeley.edu} \\
\And
Aditya Grover \\
UCLA\\
\texttt{\small adityag@cs.ucla.edu}
}
\begin{document}

\maketitle

\begin{abstract}

We are interested in training general-purpose reinforcement learning agents that can solve a wide variety of goals.
 Training such agents efficiently requires automatic generation of a \textit{goal curriculum}. 
This is challenging as it requires (a) exploring goals of increasing difficulty, while ensuring that the agent (b) is exposed to a diverse set of goals in a sample efficient manner and (c) does not catastrophically forget previously solved goals.
We propose \textit{Curriculum Self Play} (\name{}), an automated goal generation framework that seeks to satisfy these desiderata by virtue of a multi-player game with 4 agents. We extend the asymmetric curricula learning in PAIRED \citep{dennis2021emergent} to a symmetrized game that carefully balances cooperation and competition between two off-policy student learners and two regret-maximizing teachers. CuSP additionally introduces entropic goal coverage and accounts for the non-stationary nature of the students, allowing us to automatically induce a curriculum that balances progressive exploration with anti-catastrophic exploitation. We demonstrate that our method succeeds at generating an effective curricula of goals for a range of control tasks, outperforming other methods at zero-shot test-time generalization to novel out-of-distribution goals.
\end{abstract}

\section{Introduction}
Reinforcement learning (RL) has seen great success in a range of applications, from game-playing \citep{silver2016mastering, Schrittwieser_2020} to real world robotics tasks \citep{openai2019solving}. However, these accomplishments are often in the realm of mastering a single specific task.
For many applications, we further desire agents who are capable of generalizing to a variety of tasks.
 This can achieved via goal-conditioned RL, where an RL agent is trained to solve many goals by specifying a sequence of goals, i.e. a goal curriculum.
Manually designing such curricula is tedious and moreover, does not scale due to being environment specific and heuristic-driven.
Even when it is easy to engineer a curriculum, we would prefer curricula that can dynamically adapt to the learner's current capabilities. For example, effective human tutors use an understanding of their pupil's current knowledge and learning trajectory to create a curriculum. 


Our work builds on two broad desiderata for curriculum generation~\citep{bengio2009curriculum} that enable sample-efficient learning of goal-conditioned policies: \textit{progressive exploration} and \textit{anti-catastrophic exploitation}.
Progressive exploration refers to curricula where the agent gradually learns to solve tasks of increasing difficulty and diversity.
Anti-catastrophic exploitation ensures that the agent periodically resolves previously attempted goals that could become harder to solve during the course of training a.k.a. catastrophic forgetting~\citep{french1999catastrophic,kirkpatrick2017overcoming}.

Many prior works have attempted to incorporate aspects of these criteria for automatic curriculum generation. One approach is to parameterize a goal generator and pit it against a discriminator that assesses goal difficulty \citep{florensa2018automatic} or feasibility \citep{racaniere2020automated} for the current agent. 
\citet{racaniere2020automated} also address desired qualities of validity and diversity by crafting specific losses for these characteristics. However, these approaches rely on learning an accurate discriminator for assessing goal difficulty, which can be challenging. Rather than employing discriminators in the standard adversarial minimax formulation,  \citet{dennis2021emergent} proposes an alternate minimax objective based on the agent's regret. The `expert' used for computing the regret is another goal-conditioned policy whose utility is maximized by an environment generator.

\tikzset{every loop/.style={min distance=10mm,in=0,out=60,looseness=10}}

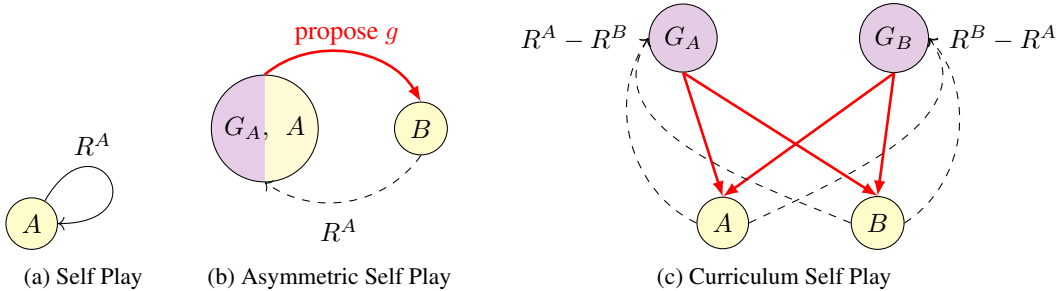
\begin{figure}[t]
\begin{subfigure}[b]{.15\textwidth}
\hspace{.1cm}
\centering
    \begin{tikzpicture}
\node [circle, draw, fill=yellow, fill opacity=.2, text opacity=1] (X1) {$A$};
\draw [->] (X1) to[loop above] (X1) node[xshift=23, yshift=30] {$R^A$};
\end{tikzpicture}\vfill
  \caption{Self Play}
\end{subfigure}
\hspace{.3cm}
\begin{subfigure}[b]{.25\textwidth}
\centering
    \begin{tikzpicture}
\node [circle, draw] (X1) {$ G_A, \;\, A$};

\begin{scope}[on background layer]
\fill[fill=yellow!20] (X1.east) arc [start angle=0, end angle=360, radius=7.1mm];
\fill[fill=violet!20] (X1.north) arc [start angle=90, end angle=270, radius=7.1mm];

\end{scope} 

\node [circle, right=of X1, draw, fill=yellow, fill opacity=.2, text opacity=1] (X2) {$B$};
\draw [line width=1, -latex, red] (X1.north) to[bend left=50] (X2.north) node[xshift=-28, yshift=25] {propose $g$};
\draw [dashed, ->] (X2.south) to[bend left=45] (X1.south)  node[xshift=28, yshift=-18] {$R^A$};
\end{tikzpicture}
  \caption{Asymmetric Self Play}
\end{subfigure}
\hspace{.5cm}
\begin{subfigure}[b]{.5\textwidth}
\centering
    \begin{tikzpicture}
\node [circle, draw, xshift=-10, fill=yellow, fill opacity=.2, text opacity=1] (X1) {$A$};
\node [circle, right=of X1, draw, xshift=10, fill=yellow, fill opacity=.2, text opacity=1] (X2) {$B$};
\node [circle, yshift=70, xshift=-25, draw, fill=violet, fill opacity=.2, text opacity=1] (G1) {$G_A$};
\node [circle, right=of G1, xshift=25, draw, fill=violet, fill opacity=.2, text opacity=1] (G2) {$G_B$};
\draw [line width=1,-latex, red] (G1.south) -- (X1.north);
\draw [line width=1,-latex, red] (G1.south) -- (X2.north);
\draw [line width=1,-latex, red] (G2.south) -- (X1.north);
\draw [line width=1,-latex, red] (G2.south) -- (X2.north);

\draw [dashed,->] (X1.west)  to[bend left=50] (G1.west) node[xshift=-28] {$R^A - R^B$};
\draw [dashed,->] (X2.west)  to[bend left=20, in=-260] (G1.west);
\draw [dashed,->] (X1.east)  to[bend right=20, in=260] (G2.east)  node[xshift=28] {$R^B - R^A$};
\draw [dashed,->] (X2.east)  to[bend right=50] (G2.east);
\end{tikzpicture}
  \caption{Curriculum Self Play}
\end{subfigure}
\caption{Interactions between learners $A$ , $B$ and teachers $G_A, G_B$ in Self Play variations. In (a), $A$ plays against variations of itself in a zero-sum game to automatically generate a curriculum. In (b), $A$ acts as both an agent and a demonstrator by being rewarded for proposing hard goals to $B$. In (c), we separate the goal generation from the demonstrators and symmetrize the whole system.
}
\label{fig:gaphical_model}
 \vspace{-0.25in}
\end{figure}

 An alternative method for automatic curriculum generation that has seen great success is self-play, where an agent plays against other versions of itself  \citep{silver2016mastering, bansal2018emergent, baker2020emergent} (Fig.~\ref{fig:gaphical_model}a). Unfortunately, this method does not apply directly to complex tasks where we do not have the structure of a two player, zero-sum game. A promising variation is Asymmetric Self-Play (ASP) \citep{sukhbaatar2018intrinsic}, which involves a minimax game between two similar agents: a goal demonstrator, Alice, and the desired goal-conditioned agent, Bob (Fig.~\ref{fig:gaphical_model}b).
 Bob is asked to solve or reverse tasks demonstrated by Alice in the same environment. To motivate Alice to propose successively challenging goals, Alice is rewarded whenever Bob fails. This method inherently captures an easy-to-hard curriculum, since any goal solved by Alice is achievable by Bob in principle. However, ASP tends to be sample inefficient as it is bottlenecked by Alice's ability to remember earlier goals and explore a diverse goal space. 


Our work aims to bring together the strengths of goal generative methods and ASP while addressing the two desiderata. Our work focuses on the multi-goal setting with the curriculum generator controlling goals 
\cite{portelas2020automatic}. We propose \textit{Curriculum Self Play} (\name{}), a symmetric multi-agent setup for automatic curricula generation.
\name{} optimizes for two off-policy goal-conditioned students, Alice and Bob, and two regret-maximizing goal generators, $G_A$ and $G_B$ (Fig.~\ref{fig:gaphical_model}c). Drawing inspiration from the minimax regret objective in PAIRED \citep{dennis2021emergent}, we designate each goal generator as being `friendly' to one of the students ($G_A$ for Alice, $G_B$ for Bob), and optimize each goal generator using the difference in utility between the preferred student and the other student (i.e., $G_A$ is rewarded $R^A - R^B$, where $R^A, R^B$ are Alice's and Bob's returns respectively when challenged with the same goal). This encourages proposing goals that are neither too easy nor too hard. With our symmetrized setup, Alice and Bob each have a `friendly' goal generator that is inclined to propose more feasible goals without being too simple, and an `unfriendly' goal generator that is inclined to propose more challenging goals. As the tasks increase in difficulty overall, the former's cooperation helps in scaffolding learning while the latter's competition helps push the boundaries of the agent's capabilities. 


We further address goal diversity and sample efficiency by leveraging the off policy entropy-regularized optimization of Soft Actor Critic (SAC) \citep{haarnoja2018soft} for training the goal generators. 
The goal generators' replay buffers track the regrets for all proposed goals. To account for non-stationary learners, we propose an update rule for the regrets in the goal generators' replay buffers based on the critic networks of Alice and Bob. In doing so, we constantly keep track of the learners' current abilities and can propose increasingly challenging goals. At the same time, using the regret objective also places the current abilities in context. For example, if Bob has forgotten how to achieve a particular goal, the regret $R^A - R^B$ increases and $G_A$ is more inclined to re-propose that goal. Thus the combination of using a replay buffer and the regret objective aids in addressing the twin goals of progressive exploration and anti-catastrophic exploitation.

Compared to prior methods for goal generation, we demonstrate that \name{} is able to satisfy the desiderata for good curricula. We show that this translates to improved sample efficiency and generalization to novel out-of-distribution goals across a range of robotics tasks, encompassing navigation, manipulation, and locomotion. We highlight some emergent skills that arise via \name{} when generalizing to particularly challenging goals, and find that our method is able to handle cases when portions of the goal space are infeasible.

\section{Background}\label{sec:form}


\textbf{Problem Setup.}
We model the environment and the goal-conditioned learner using a Markov decision process, $\mathcal{M} = \langle\mathcal{S}, \mathcal{A}, \mathcal{T}, \mathcal{R}, \mathcal{G}, \gamma\rangle$, where $\mathcal{S}$ is the state space, $\mathcal{A}$ is the action space, $\mathcal{T} : \mathcal{S} \times \mathcal{A}  \times \mathcal{S} \rightarrow \mathbb{R}$ is the environment transition function, $\mathcal{R} : \mathcal{G}  \times \mathcal{S} \rightarrow \mathbb{R} $ is the goal-conditioned reward function, $\mathcal{G} \subseteq \mathcal{S}$ is the goal space, and $\gamma$ is a discount factor. We consider both dense and sparse reward functions: $r(s|g) = -d(s, g)$, where $d$ is a distance metric (e.g. $L_2$ norm) between the state $s$ and the goal $g$ for the former, or $r(s|g) = \mathbbm{1}\{d(s, g) < \epsilon\}$ for the latter. Our objective is to learn a policy $\pi (a_t | s_t, g)$ that maximizes the success rate of reaching all goals $g \in \mathcal{G}$. To assess ability to generalize to out-of-distribution goals, we split the goal space into $\mathcal{G}_{id}$ and $\mathcal{G}_{ood}$ where the goal generators can only propose goals $g \in \mathcal{G}_{id}$, and we evaluate on sampled $g \in \mathcal{G}_{ood}$.

\textbf{Self Play (SP).}
SP is an approach for automatic curriculum generation in competitive, zero-sum games.
In this paradigm, an agent plays against past versions of itself in order to automatically generate a learning curriculum (Figure~\ref{fig:gaphical_model}a). That is, we consider Alice and Bob as two versions of the same underlying agent, and each agent's objective is the negative of the others', so $R^A = -R^B$. However, this approach is only applicable for symmetric zero-sum games, and does not lend itself to learning curricula for goal-conditioned agents in other setups.

\textbf{Asymmetric Self Play (ASP).}
ASP \citep{sukhbaatar2018intrinsic} extends self play to non-zero sum games by proposing an asymmetric parametrization of two competing agents. During each episode, Alice's objective is to propose and demonstrate a task for Bob to complete (Figure~\ref{fig:gaphical_model}b). In the goal-conditioned paradigm, we take Alice's final state $s^A_T$ as the proposed goal $g$. Alice and Bob now have different objectives:  Bob's return, $R^B = \sum_t \gamma^t r(s^B_t|g)$, is simply based on reaching the proposed goal, while Alice's reward is based on Bob's inability to reach the goal, $R^A = \mathbbm{1}(d(s_T^B, g) \geq \epsilon)$. While this approach lends itself well to more complex tasks such as robotic manipulation \citep{openai2021asymmetric}, it has poor sample efficiency. Since we rely on a learning agent, Alice, to propose and solve goals, the generated goals are limited by Alice's own ability to explore and learn challenging yet feasible goals. This can be particularly problematic towards the start of training, where we can waste many episodes proposing simple and redundant goals for Bob.


\section{Automatic Curriculum Generation via \name{}}\label{sec:method}

Our curriculum generation method consists of four main players: two goal-conditioned learning peers, Alice $\pi_A$ and Bob $\pi_B$, and two goal generators, $G_A$ and $G_B$. $\pi_A(a|s,g)$ and $\pi_B(a|s,g)$ are both optimized to maximize the discounted sum of goal-conditioned rewards $R(g) = \sum_t \gamma^t r^t(g)$, across proposed goals $g \in \mathcal{G}_{id}$. Aside from any initial randomness in initializing $\pi_A$ and $\pi_B$, the two policies are parameterized with the same architecture. $G_A(g|s_0, z)$ and $G_B(g|s_0, z)$ are parametrized as policies that output a goal $g \in \mathcal{G}_{id}$ given an initial observation of the environment $s_0$ and a latent noise variable $z \sim \mathcal{N}(0,1)$. The inclusion of such a noise variable draws from generative models (e.g. GANs, VAEs) where such a $z$ is the input to the generator, and is also found in prior goal generative methods \citep{florensa2018automatic}. The goal generators are optimized to maximize the regret of their corresponding agent: $\mathfrak{R}^{G_A} = R^A(g) - R^B(g)$ and $\mathfrak{R}^{G_B} = -\mathfrak{R}^{G_A} = R^B(g) - R^A(g)$. 

We refer to our overall curriculum generation approach as \textit{Curriculum Self Play} (CuSP). We present an overview of \name{} in Algorithm \ref{algo} and illustrate it in Figure~\ref{fig:gaphical_model}c. The algorithm proceeds in rounds. In every round, both goal generators propose a goal, $g_A$ from $G_A$ and $g_B$ from $G_B$. To keep the symmetrization fair and hopefully scaffold learning initially, we first rollout the `easier' goal for each respective agent (ie. $g_A$ for Alice, $g_B$ for Bob) and compute the corresponding environment returns and update the learners. Next, we rollout the `harder' goal for each respective agent (ie. $g_B$ for Alice, $g_A$ for Bob) and update again. At the end of the round, we update the goal generators.


\begin{algorithm}[h!]
\SetAlgoLined
 Initialize goal-conditioned learners $\pi_A, \pi_B$, goal generators $G_A, G_B$\;
 \While{not converged}{
 \tcp{generate goals}\
  $g_a \sim G_A$, $g_b \sim G_B$\;
  \tcp{evaluate easier goals first}
  $R^A_{Easy} \leftarrow rollout(\pi_A, g_A)$;  $R^B_{Easy} \leftarrow rollout(\pi_B, g_B)$\;
  \tcp{update policies}
  $\pi_A \leftarrow train\_learner(\pi_A, R^A_{Easy})$; $\pi_B \leftarrow train\_learner(\pi_B, R^B_{Easy})$\;
  \tcp{evaluate harder goals second}
  $R^A_{Hard} \leftarrow rollout(\pi_A, g_B)$;  $R^B_{Hard} \leftarrow rollout(\pi_B, g_A)$\;
  \tcp{update policies}
  $\pi_A \leftarrow train\_learner(\pi_A, R^A_{Hard})$;  $\pi_B \leftarrow train\_learner(\pi_B, R^B_{Hard})$\;
  \tcp{compute regrets and update generators}
  $G_A \leftarrow train\_generator(G_A, \{(R^A_{Easy} - R^B_{Hard}), g_A\}, \{(R^A_{Hard} - R^B_{Easy}), g_B\})$ \;
  $G_B \leftarrow train\_generator(G_B, \{(R^B_{Easy} - R^A_{Hard}), g_B\}, \{(R^B_{Hard} - R^A_{Easy}), g_A\})$ \;
  
  
 }
 \caption{Curriculum Self Play (\name{})}
 \label{algo}
\end{algorithm}

\textbf{Symmetrization.}
The regret objective $R^A - R^B$ helps $G_A$ propose goals that are challenging for Bob while still being feasible for Alice. However, this can be detrimental when Alice and Bob's performance diverge too quickly.
To address this, we propose symmetrizing the multi-agent system by introducing a corresponding goal generator $G_B$ whose regret objective is $R^B - R^A$, essentially pitting it in a zero-sum game against $G_A$. 
Having these two goal generators produces two types of goals for each agent: a)  a feasible yet not too simple goal from the `friendly' goal generator ($G_A$ for Alice, $G_B$ for Bob), which can help scaffold learning quickly, and b) a challenging yet not too difficult goal from the `unfriendly' goal generator ($G_A$ for Bob, $G_B$ for Alice), which can help push the boundaries of the agent's abilities. Both of these properties help encourage the generation of goals that are \textbf{feasible} yet progressively challenging.

\begin{figure}[]
\centering
\hspace{-2.7cm}
\begin{subfigure}{.2\textwidth}
\vspace{-.5cm}
\includegraphics[width=\textwidth]{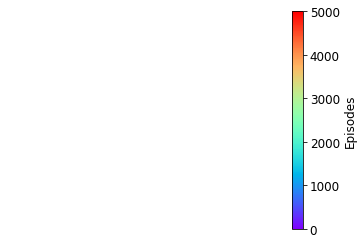}
\end{subfigure}
\begin{subfigure}{.22\textwidth}
\includegraphics[width=\textwidth]{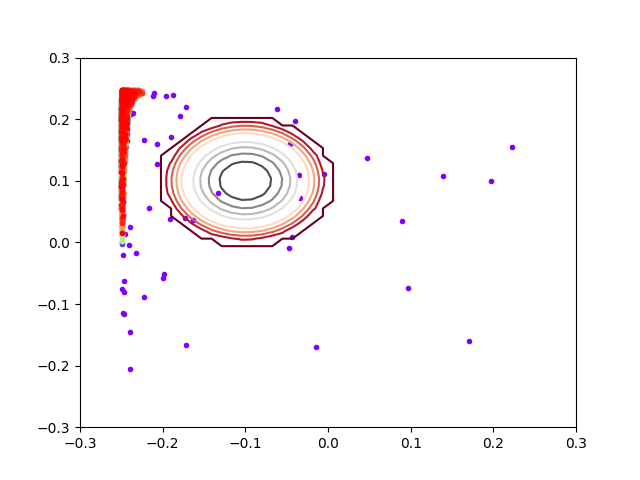}
\caption{Adam}
\label{fig:s_adam}
\end{subfigure}
\begin{subfigure}{.22\textwidth}
\includegraphics[width=\textwidth]{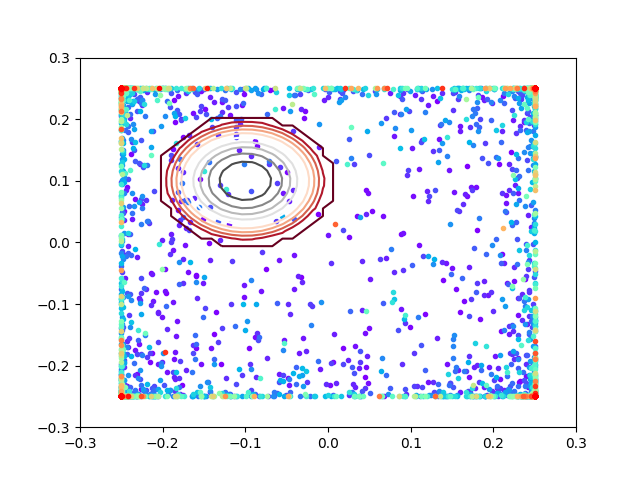}
\caption{PPO}
\label{fig:s_ppo}
\end{subfigure}
\begin{subfigure}{.22\textwidth}
\includegraphics[width=\textwidth]{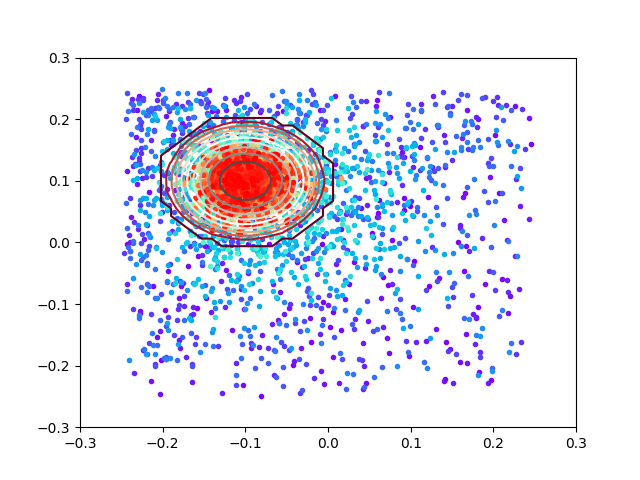}
\caption{SAC}
\label{fig:s_sac}
\end{subfigure}

\caption{Performance of different optimization methods on a toy landscape that is primarily flat except for a single gaussian centered at (-0.1, 0.1) shown by the contours. Red shows most recently proposed points. SAC is able to find the peak while the other methods get stuck in the flat regions.
}
\label{fig:stationary}
\end{figure}

\begin{figure}[]
\centering
\begin{subfigure}{.49\textwidth}
\centering
\includegraphics[width=.4\textwidth]{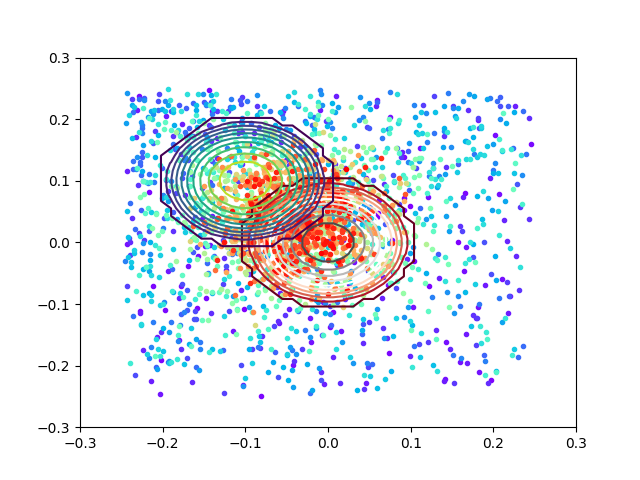}
\includegraphics[width=.4\textwidth]{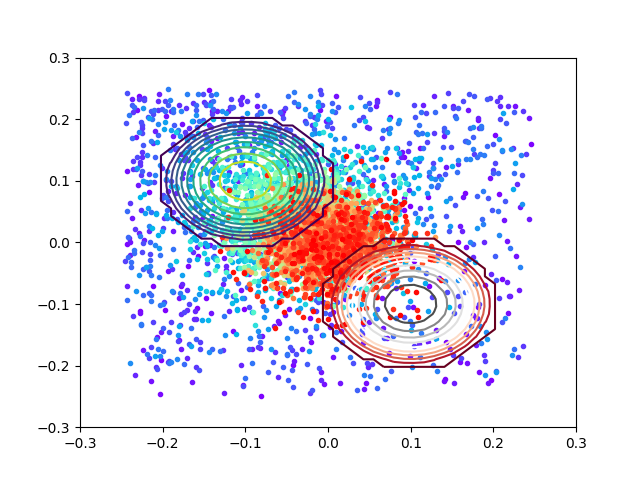}
\caption{SAC \textit{without} regret updates at 2500 and 5000 steps.}
\label{fig:nonstationarya}
\end{subfigure}
\begin{subfigure}{.49\textwidth}
\centering
\includegraphics[width=.4\textwidth]{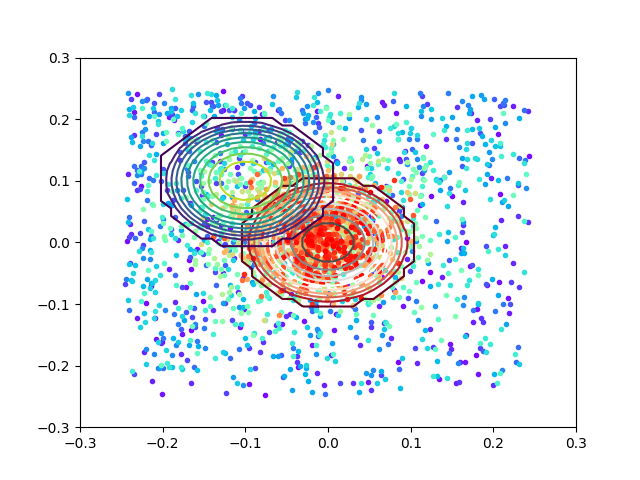}
\includegraphics[width=.4\textwidth]{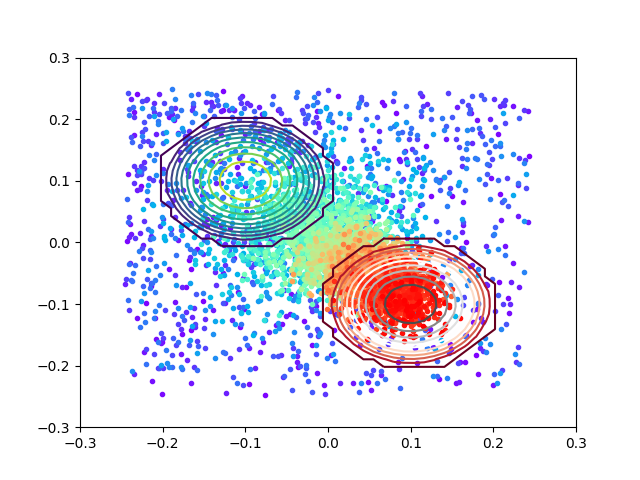}
\caption{SAC \textit{with} regret updates  at 2500 and 5000 steps.}
\label{fig:nonstationaryb}
\end{subfigure}
\begin{subfigure}{.09\textwidth}
\end{subfigure}

\caption{SAC optimizing a non-stationary toy landscape that is primarily flat, but has a gaussian peak starting at (-0.1, 0.1) and moving diagonally to (0.1, -0.1). Red shows most recently proposed points. Without the regret updates, the optimizer lags and cannot find the optimum. 
}
\label{fig:nonstationary}
\end{figure}

\textbf{Goal Generator Design.}
We motivate our goal generator design with our proposed desiderata: progressive exploration with anti-catastrophic exploitation. 
Although the goal generator is not an RL agent in the traditional sense -- it only `acts' in a single timestep by proposing a goal and its `observation' only changes based on a random noise variable -- optimizing such an explore-exploit tradeoff effectively naturally inspires the use of an RL objective, and RL objectives have previously been used for one-step optimization (e.g., one-step Q-learning \citep{watkins1992q}).

Thus, to train the goal generators we use an RL learning objective and parameterization inspired by a single-time step variant of Soft-Actor Critic (SAC) \citep{haarnoja2018soft}. SAC is an off policy algorithm that optimizes for a trade-off between expected reward and policy entropy $H(\pi)$ with a coefficient $\alpha$. In our setup, our goal generator only acts once to propose a goal per episode, so each `trajectory' is only a single time step consisting of the initial state $s$ (for which we concatenate the initial environment observation $s_0$ with a latent noise variable $z$), the proposed goal action $g$, and we get the corresponding ``reward" for the goal generator policy by computing the regret between $\pi_A$ and $\pi_B$ on $g$. Thus the objective for each goal generator policy $G$ is given by
\begin{align*}
    \max_G \mathbb{E}_{g \sim G} \left[ \underbrace{\mathfrak{R}^G(s,g)}_{\text{exploitation}} + \alpha \underbrace{H(G)}_{\text{exploration}} \right]
\end{align*}

where the regret term addresses anti-catastrophic exploitation by prioritizing goals where regret has increased, and the entropy term addresses progressive goal space exploration. 


We equip the SAC goal generators with a replay buffer to store all past goals and their regrets. This helps address \textbf{forgetting} and improve the sample efficiency of goal generation compared to prior methods which optimize the generator as another agent using on-policy algorithms such as PPO \citep{schulman2017proximal}. We further take advantage of SAC's entropy regularization to progressively increase the \textbf{diversity} of the goals, with consideration of Alice and Bob's capabilities using the stored regrets.  To empirically motivate the use of SAC, we compare SAC against PPO and Adam \citep{kingma2017adam} in optimizing a synthetic landscape. One challenge of the regret objective is that at the beginning of training, the landscape will mostly be flat when $\pi_A$ and $\pi_B$ are similarly unskilled. In Fig.~\ref{fig:stationary}, we find that only SAC is proficient at optimizing a primarily flat landscape.  

\textbf{Dynamic Regret Updates for Non-stationary Agents.}
However, using an off policy RL algorithm for optimizing such a goal generator requires further modifications. For more typical agents, the received rewards are a stationary property of the environment. In our case this is no longer true -- Alice and Bob are continuously learning and part of the multi-agent environment, so the regrets for the same goal will also change over time. To address this issue, we would like to update the non-stationary regret values in the goal generator's replay buffer. However, it is expensive for Alice and Bob to replay all the goals in the environment. As a sample efficient trick, we leverage the observation that the critics of Alice and Bob 
can provide a current estimate of their performance.
Hence, after every round, we update the regrets of previous goals in the replay buffer using the difference between the corresponding value estimates from the critic networks of Alice and Bob. This allows us to keep an updated estimate of \textbf{feasibility} based on Alice and Bob's current skills.


To empirically validate that updating the replay buffer regrets helps in a non-stationary landscape, we carry out investigations of a SAC goal generator both with and without regret updates in Fig. \ref{fig:nonstationary}. As hypothesized, updating the replay buffer allows the generator to find the moving optimum.  The combination of the replay buffer and dynamic regret updates also strengthens our method by automatically prioritizing goals that have been forgotten. With the non-stationary regret updates, the goal generators always have an updated estimate of each learner's abilities. Thus if Bob performs worse on a past goal, the regret increases for $G^A$ and it is more likely to repropose it, addressing catastrophic \textbf{forgetting} for Bob. Likewise, symmetrization helps to address catastrophic \textbf{forgetting} for Alice. Without the corresponding $G_B$, if Alice performs worse at a goal, the regret $R^A - R^B$ decreases and $G_A$ is less likely to repropose it unless Bob's performance decreases even further. This can potentially lead to neglecting to propose these challenging goals for Bob. 

\textbf{Summary.}
Inspired by \citet{racaniere2020automated}, we specify desired qualities of a goal generator:
\begin{itemize}
  \setlength\itemsep{-.3em}
    \item \textbf{Progressive Feasibility} -- the goals should be incrementally more difficult, accounting for the agent's current skill level and the fact that the agent is a non-stationary learner.
    \item \textbf{Progressive Diversity} -- the goals should encourage increasing exploration of goals.
    \item \textbf{Anti-Forgetting} -- if previous goals are forgotten, they should be reproposed. 
\end{itemize}

We summarize the contributions of each component of our method to desired qualities of feasibility, diversity, and anti-forgetting in Table \ref{tab:our_contributions}.  We visualize generated goal distributions in Appendix \ref{app:goalgenqualities}  and ablate over each component of Table \ref{tab:our_contributions} in Appendix \ref{sec:cusp-abl} to illustrate their importance.

\begin{table}[]
\centering
\caption{Contribution of each component of our method to the desired qualities.}
\begin{tabular}{@{}c|ccccc}
\toprule
 & \textbf{Progressive Feasibility}        & \textbf{Progressive Diversity}   & \textbf{Anti-Forgetting}  \\\midrule
\makecell{Entropy Regularization} &         & \checkmark            &        \\ \midrule 
\makecell{Regret Replay Buffer} &         &          &  \checkmark        \\ \midrule 
\makecell{Dynamic Regret Updates}      &         \checkmark     &  &\checkmark          \\ \midrule
Symmetrization                       &          \checkmark             &                                               & \checkmark     \\ 
\bottomrule
\end{tabular}
\label{tab:our_contributions}
\end{table}

\section{Experiments}\label{sec:exp}
\begin{figure}[h!]
    \centering
    \includegraphics[width=.9\textwidth]{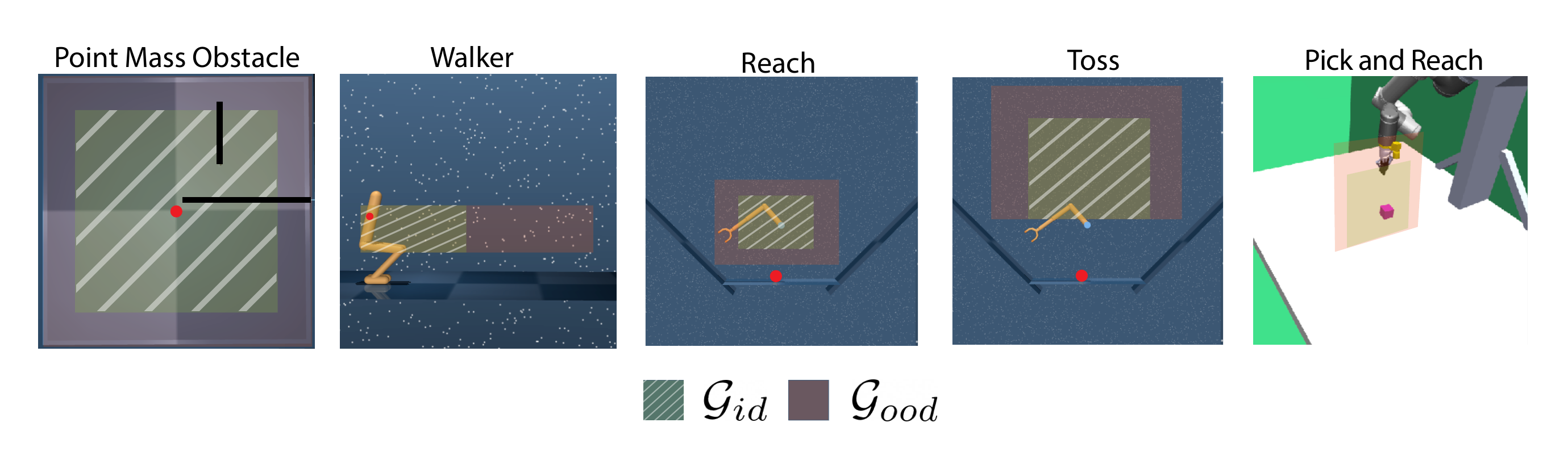}
    \caption{Illustration of the five environments, $\mathcal{G}_{id}$ showing training goal space and $\mathcal{G}_{ood}$ showing test goal space. The red points indicate what is used to evaluate whether a goal was reached in each environment -- the point mass coordinates for Point Mass Obstacle, the torso center location for Walker, the ball location for Reach/Toss, and the block location for Pick and Reach.}
    \label{fig:envs}
    \vspace{-.25cm}
\end{figure}

\textbf{Environments.} We test our method across a suite of continuous control tasks adapted from the Deepmind Control Suite \citep{tassa2018deepmind} and OpenAI Gym \citep{1802.09464}, using the MuJoCo simulator \citep{6386109}. We analyze our method across a variety of task types: \textit{navigation}, \textit{locomotion}, and \textit{manipulation}. Point Mass Obstacle is a navigation task with some walls. Walker is a locomotion task with a goal torso position. Reach and Toss are manipulation tasks with a goal ball position. Toss provides an interesting challenge as gravity plays a large role -- the robot must learn to aim precisely. Pick and Reach \citep{robogym2020} is a manipulation task where success is measured by picking up a block and moving it to a target position. Environments and goal spaces are visualized in Figure \ref{fig:envs}, with details in Appendix \ref{app:envs}.

\textbf{Training Setup.} For \name{}, we train the goal generators using the SAC implementation from \citet{pytorch_sac} with their default architecture and hyperparameters. Since symmetrization leads to similar performance from Alice and Bob in expectation, we report the results from Bob only for consistency. We compare against a domain randomization \cite{tobin2017domain} baseline and two other baselines that use minimax formulations for curriculum generation: the GoalGAN approach~\citep{florensa2018automatic} and ASP+BC~\citep{openai2021asymmetric}.  We train all learner policies with the same architecture and SAC to isolate differences to the goal generation method.

We aim to answer the following questions with our experiments:
\begin{itemize}
  \setlength\itemsep{-.2em} 
    \item Does the \name{} curricula induce agents that can  generalize to novel goals from $\mathcal{G}_{ood}$? 
    \item Does the \name{} curricula induce interesting emergent skills for solving harder goals?
    \item  Does the \name{} curricula handle cases where portions of the goal space are poorly specified (e.g. with  impossible dimensions)? 
\end{itemize}

\subsection{Test-Time Out-Of-Distribution Generalization}
To assess whether our curriculum improves generalizability, we evaluate each method on its ability to reach randomly sampled out-of-distribution goals $g \sim \mathcal{G}_{ood}$. In Fig. \ref{fig:ood-eval} we see that across environments, generally CuSP matches or outperforms all the baselines for task success. ASP+BC's sample inefficiency due to Alice's limitations leads to poor performance, while GoalGAN does well in environments where the goal space is more symmetrical (e.g. with navigation or the reaching tasks, where most points in the goal space are similarly difficult). That said, we note GoalGAN struggles more in environments where portions of the goal space are more challenging (e.g. walking to the right or aiming and tossing upwards). We hypothesize this is due to the challenges of accurately learning the feasibility of different goals in an asymmetric goal space. Lastly, we find that the domain randomization baseline is particularly well suited for this OOD random evaluation. The baseline is trained on good coverage of the in-distribution goal space, which is beneficial for generalization to random goals in the navigation and manipulation tasks, but is less helpful for the more asymmetric goal spaces like Walker and Toss. For ablations on ASP, see Appendix \ref{sec:asp-abl}. We also ablate over each contribution of our method as detailed in Table \ref{tab:our_contributions} in Appendix \ref{sec:cusp-abl}. Specifically, we ablate the regret replay buffer, the dynamic regret updates, entropy regularization, and symmetrization. We find that only using the regret objective from PAIRED \citep{dennis2021emergent} is insufficient, even with naively applying symmetrization, and that both entropy regularization and an updated replay buffer are crucial for high evaluation task success.

\begin{figure}[t]
\centering
\includegraphics[width=.5\linewidth]{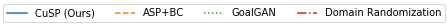}
\begin{minipage}{.95\textwidth}
\hspace{-.3in}
\begin{tikzpicture}
  \node[xshift=-1cm] (img)  {\includegraphics[width=.19\linewidth]{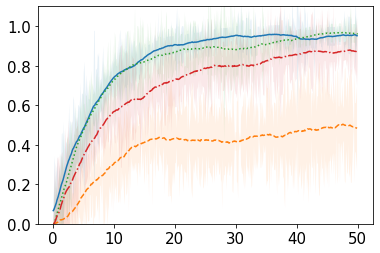}};
    \node[left=of img, node distance=0cm, rotate=90,yshift=-.7cm,xshift=1.1cm] {Success Rate};
    \node[above=of img, yshift=-1.1cm, xshift=.1cm] {Point Mass Obstacle};
  \node[right=of img, xshift=-1cm] (img)  {\includegraphics[width=.19\linewidth]{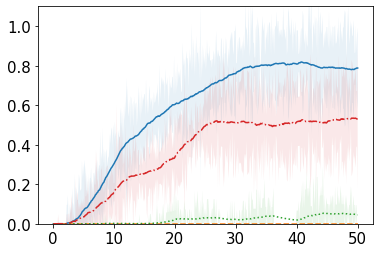}};
   \node[above=of img, yshift=-1.1cm, xshift=.1cm] {Walker};
   \node[right=of img, xshift=-1cm] (img)  {\includegraphics[width=.19\linewidth]{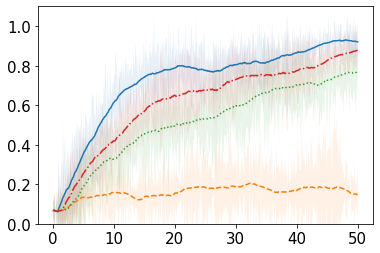}};
    \node[above=of img, yshift=-1.1cm, xshift=.1cm] {Reach};
   \node[right=of img, xshift=-1cm] (img)  {\includegraphics[width=.19\linewidth]{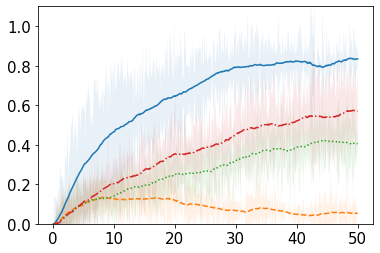}};
    \node[above=of img, yshift=-1.1cm, xshift=.1cm] {Toss};
   \node[right=of img, xshift=-1cm] (img){\includegraphics[width=.19\linewidth]{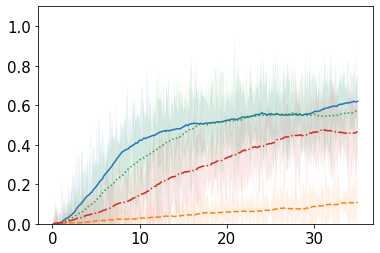}};
    \node[above=of img, yshift=-1.1cm, xshift=.1cm] {Pick and Reach};
    \node[below=of img, yshift=1.1cm, xshift=-5.5cm] (axes) {Rounds ($\times 100$)};

 \end{tikzpicture}
  
 
  

\end{minipage}

\caption{ Success rates across environments on random goals sampled from $\mathcal{G}_{ood}$. Results averaged across 3 seeds for each method. Each round corresponds to goal generation, then rollouts, then the respective goal generator updates.   Generally we find our method matches or outperforms all baselines in out-of-distribution generalization, especially when tackling the harder locomotion task. }
\label{fig:ood-eval}
\end{figure}

\begin{figure*}[t]
\centering
\includegraphics[width=.5\linewidth]{figures/iclrfigs/legend.png}
\begin{minipage}{\textwidth}
\hspace{.5cm}
\begin{tikzpicture}
  \node (img)  {\includegraphics[width=.25\linewidth]{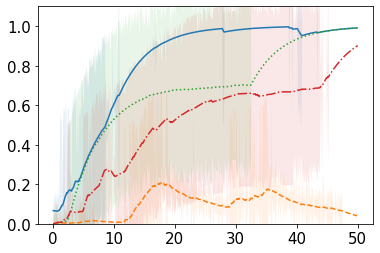}};
  \node[above=of img, yshift=-1.1cm, xshift=.1cm] {Behind Obstacles};
  \node[left=of img, node distance=0cm, rotate=90, anchor=center,yshift=-0.7cm] {Success Rate};
  \node[below=of img, yshift=1.2cm, xshift=3.5cm] (axes) {Rounds ($\times 100$)};
  \node[below=of img, yshift=.7cm] (imgbelow)  {\includegraphics[width=.1\linewidth]{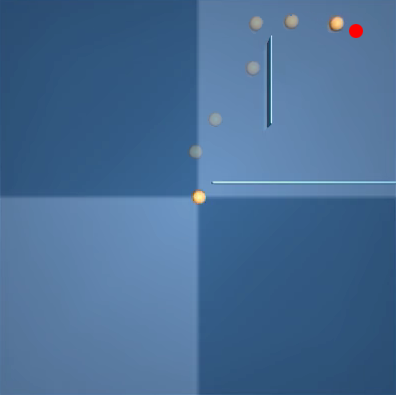}};
  \node[right=of img, xshift=-1cm] (img)  {\includegraphics[width=.25\linewidth]{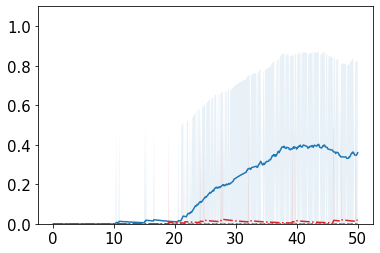}};
  \node[above=of img, yshift=-1.1cm, xshift=.1cm] {Walker Far Right};
  \node[below=of img, yshift=0.7cm] (imgbelow)  {\includegraphics[width=.1\linewidth]{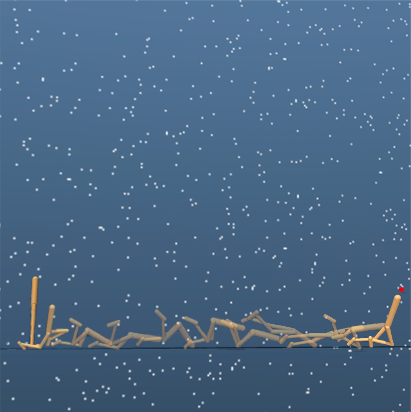}};
  \node[right=of img, xshift=-1cm] (img)  {\includegraphics[width=.25\linewidth]{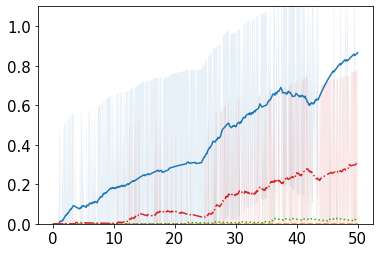}};
  \node[above=of img, yshift=-1.1cm, xshift=.1cm] {Toss Upwards};
  \node[below=of img, yshift=.7cm] (imgbelow)  {\includegraphics[width=.1\linewidth]{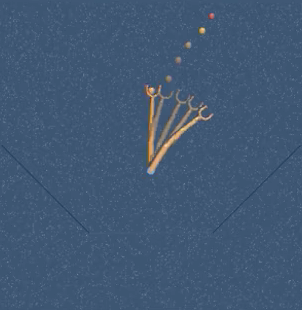}};
\end{tikzpicture}
\end{minipage}%



\caption{Success rates for three more challenging goals in Point Mass Obstacle, Walker, and Manipulator, and corresponding sample trajectories. In Point Mass we learn to navigate around walls faster than other methods, in Walker we learn to run quickly (albeit by using its torso) whereas other methods fail to succeed at all, and in Toss we learn to aim upwards accurately.}
\label{fig:skills}
\end{figure*}





\subsection{Task-Specific Skills}
Next, we examine if CuSP can induce emergent skills in learners for tackling harder, environment specific goals. Below, we enlist 3 representative examples in the more asymmetric environments where certain goals are clearly more challenging to solve. As seen in Fig.~\ref{fig:skills}, our method is able to improve upon success rates on these goals.
\vspace{-.5em}
\begin{itemize}  \setlength\itemsep{-.2em}
    \item Point Mass Obstacle -- \textit{Behind Obstacle}s: requires learning to move around the walls to the edge of the training space. Here, we learn to succeed much faster than the other methods. 
    \item Walker -- \textit{Far Right}: requires learning to move forward quickly to reach a goal far out of the training space. Here, we learn to succeed at the task where all other baselines fail, suggesting that our method produces a useful curriculum for learning challenging locomotion tasks while the others cannot, or require many more samples to do so. 
    \item Toss -- \textit{Upwards}: requires learning to toss the ball accurately upwards outside of the training space. Here, we learn to succeed much faster than the other methods. 
\end{itemize}

\subsection{Misspecified Goal Dimensions}
Lastly, we investigate if the goal generator methods (\name{}, GoalGAN, and DR) can handle the case where parts of the specified goal space is misspecified.  To do this, we append an extra dimension to the goal space, bounded between $[-1, 1]$, and evaluate performance on  $\mathcal{G}_{id}$ and $\mathcal{G}_{ood}$ with randomly sampling the third dimension during evaluation. We use Reach as a case study, and as seen in Fig. \ref{fig:infeas}, agents trained using \name{}'s goal curricula are less affected by the increased goal space while the baselines plateau at a lower success rate. In particular, the DR baseline is most adversely affected by the higher dimensionality. This suggests that while random sampling provides good coverage and is amenable to more simple goal spaces, as seen above, the induced curricula degrades in quality with increasingly complex goal spaces. Note that we exclude ASP from these experiments as it does not use a pre-specified goal space and only sets achieved states as goals.

\begin{figure*}[t]
\centering
\begin{minipage}{\textwidth}
\hspace{1in}
\begin{tikzpicture}
  \node (img)  {\includegraphics[width=.3\linewidth]{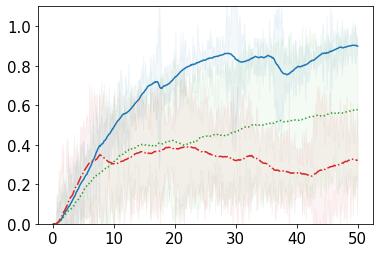}};
  \node[above=of img, yshift=-1.1cm, xshift=-.1cm] {Reach $\mathcal{G}_{id}$};
  \node[left=of img, node distance=0cm, rotate=90, anchor=center,yshift=-0.7cm] {Success Rate};
  \node[below=of img, yshift=1.1cm, xshift=2cm] (axes) {Rounds ($\times 100$)};
  \node[right=of img, xshift=-1cm] (img)  {\includegraphics[width=.3\linewidth]{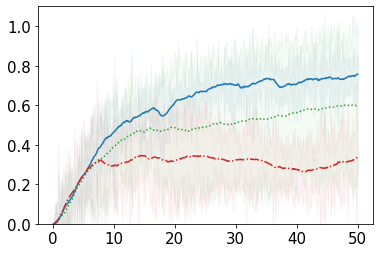}};
  \node[above=of img, yshift=-1.1cm, xshift=.1cm] {Reach $\mathcal{G}_{ood}$};
  \node[right=of img, xshift=-1cm, yshift=.4cm]  {\includegraphics[width=.13\linewidth]{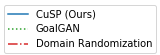}};
\end{tikzpicture}
\end{minipage}%

\caption{Success rates for Reach with a third impossible dimension added to the specified goal space. We test on both in distribution and out of distribution randomly sampled goals in the 3D goal space, where the third dimension is not feasible in the environment. Generally we find \name{} is least affected while the  baselines generally performs worse with the higher dimensional space.}
\label{fig:infeas}
\end{figure*}

\section{Related Work}

We summarize four main thrusts of work within automatic curriculum generation.

\textbf{Modeling Goal Distributions.}
Many previous works have tackled the problem of generating ``appropriately'' difficult goals for an autocurriculum. In discrete environments, \citet{campero2021learning} propose an adversarial goal-generator that is rewarded when the learning agent achieves the goal with a thresholded level of effort. However, they rely on a heuristic time limit hyperparameter to determine `suitable effort' (i.e. goal feasibility) and hyperparameters for the teacher reward, whereas our approach directly uses regrets from multiagent interactions to automatically gauge feasibility and provide a reward signal to the teacher. In continuous domains, \citet{zhang2020automatic} samples goals that maximize the uncertainty of the learner's Q-function, and \citet{nair2018visual} learn a latent representation to sample `imagined' goals from. GoalGAN \citep{florensa2018automatic} trains a GAN to generate goals of intermediate difficulty for the agent, using a discriminator to predict whether the generated goals would be appropriate by thresholding success rates. These methods use a heuristic measure of challenging goals, while we leverage the multi-agent setting to automatically find interesting goals. Similarly to GoalGAN, the Setter-Solver paradigm \citep{racaniere2020automated} trains a goal setter whose objectives are a balance of goal validity, feasibility, and coverage, as well as a discriminator for predicting goal feasibility. In contrast, our approach does not need to explicitly weigh the contributions of each loss or incorporate discriminators.

\textbf{Multi-Agent Curriculum Learning.}
The idea that interactions between multiple agents can lead to emergent curricula has been proposed in prior works \citep{leibo2019autocurricula, baker2020emergent} and relates to the theory of mind~\citep{rabinowitz2018machine,grover2018learning,grover2018evaluating} postulating human learning via social interactions. One parametrization of such interactions is through designating another agent(s) as a teacher who aims to select an appropriate curriculum for the student. For task-based curricula, \citet{matiisen2017teacherstudent} train a teacher policy for selecting sub-tasks for the student, and \citet{graves2017automated, portelas2019teacher} treat task selection as a bandit problem for the teacher. 

 Rather than having an explicit teacher, Asymmetric Self-Play (ASP) \citep{sukhbaatar2018intrinsic} automatically generates a curriculum for exploration by training a similarly parametrized agent, Alice, to demonstrate tasks that the learning agent, Bob, struggles to complete. \citet{openai2021asymmetric} extend ASP to robotic manipulation tasks by incorporating an imitation learning loss for Bob to imitate Alice on challenging goals. As Alice is no longer setting the goals, we do not make use of behaviour cloning in \name{}. While they demonstrate zero-shot generalization to a diverse set of target tasks, relying on Alice to set the goals can also be incredibly sample inefficient, as we saw in our evaluations. \citet{sodhani2018memory} propose a memory-augmented version of ASP by providing the learners an external memory to speed up exploration, which is analogous to our method training with off-policy learners. \citet{sukhbaatar2018learning} extends ASP to a hierarchical learning framework by learning subgoal representations. Competitive Experience Replay \citep{liu2019competitive} generates an exploratory competitive curriculum through penalizing Alice if visiting states Bob has visited, but rewards Bob for visiting states found by Alice.

\textbf{Environment Design.}
Another method for inducing curricula is through generating a sequence of environments to learn in. Procedural level generation is a common approach for developing games, and can lead to increased generalizability \citep{justesen2018illuminating}. POET \citep{wang2019paired} uses a population of adversaries to generate increasingly challenging environments for a population of learning agents. PAIRED \citep{dennis2021emergent} proposes optimizing an adversarial environment designer through minimax regret between two learning agents, inspiring the objective we use for our goal generators. However, this approach does not account for catastrophic forgetting and can suffer from sample inefficiencies from using an on-policy method and in cases where the regret landscape is flat. That said, environment design approaches can be used in a complementary way with our approach, extending it beyond goal generation.

\textbf{Skill Discovery.}
A closely related method for curriculum generation is through unsupervised skill discovery, where an agent explores an environment to automatically discover skills that can be composed to achieve a variety of tasks. Prior methods have used information theoretic objectives to encourage skill acquisition that covers a diverse set of behaviours \citep{eysenbach2018diversity} or generate a multi-task distribution \citep{gupta2020unsupervised}. For robotics tasks, \citep{hausman2018learning} propose a method for learning a latent skill space that can be composed and is transferable between tasks, DADS \citep{sharma2020dynamicsaware} proposes a combination of model-based and model-free RL that optimizes for discovering predictable skills that can be composed by a planner, and Play-LMP \citep{pmlr-v100-lynch20a} proposes using self-supervision across play data for skills discovery. Another method for curriculum generation is through using hindsight \citep{andrychowicz2018hindsight, li2020generalized}. These works are complementary to ours as our method aims to induce a curriculum through sparse goal generation, which can be augmented with unsupervised skill discovery. 

\section{Conclusion}\label{sec:discuss}

We propose a new method, \name{}, for automatic curriculum generation based on symmetric extension of self play approaches for non zero-sum games. Our proposed method aims to address weaknesses of prior approaches by being designed to specifically consider a balance of progressive exploration and anti-catastrophic exploitation. To this end, we make use of two off-policy students and two off-policy regret-maximizing goal generators, each of which is more `friendly' to one of the students. As a result, we generate goals that are a) challenging yet feasible, b) diverse, and c) revisited to mitigate anti-forgetting. We qualitatively highlight these desired qualities in goal generation compared to prior methods, demonstrate quantitative gains at generalizing to out-of-distribution unseen goals, and find some emergent task-specific skills.

\textbf{Limitations and Future Work.} While our method shows promise for addressing key components of generating useful curriculum, there is still significant room for improvement. One limitation of our current approach is that it depends on specifying a goal space for the generator to act in, which was not necessary in the original ASP method. That said, specifying a goal space also allows us to imbue the system with human priors for what goals should or should not be valid, whereas ASP may need post-hoc goal filtering if Alice ends up in an undesirable state. 

To further tackle automatic curriculum generation, future work should explore scenarios where goal spaces are not fully pre-specified, as well as extending to high dimensional goal spaces (e.g. images). Furthermore, another interesting avenue would be extrapolating \name{} to a larger number of student and teacher agents, which can potentially provide benefits through better goal space coverage and giving a more accurate estimate of feasibility. 

\subsection*{Acknowledgements}
Thanks to Eugene Vinitsky for helpful discussions, and Hao Liu and Olivia Watkins for feedback on drafts. This work was supported by the Center for Human-Compatible Artificial Intelligence and a FAIR-BAIR collaboration between UC Berkeley and Meta.

\subsection*{Ethics Statement}

Automatic curriculum generation has potential as a method for learning complex agent behaviours in an unsupervised way. Requiring less direct supervision (eg. through hand-tuned, shaped reward functions) can lead to more robust AI systems for real-world applications. At the same time, it is important to consider potential impacts unsupervised reinforcement learning, as it may lead to unintended behaviours. By incorporating human engineering and priors through goal space specification in the goal generative methods, like CuSP, we can help restrict the set of goal spaces to more desirable (e.g. safe) behaviours. While our empirical studies have been done in simulated systems, careful and thorough analysis of learned behaviours should be done before deploying CuSP on real world systems.

\subsection*{Reproducibility Statement}

Code is available at \url{https://github.com/yuqingd/cusp}. For experimental environment details, see Appendices \ref{app:toy}, \ref{app:envs}. For hyperparameters, see Appendix \ref{app:hyperparams}. For ablations of \name{} and ASP, see Appendices \ref{sec:cusp-abl} and \ref{sec:asp-abl} respectively.

\bibliography{iclr2022_conference}

\begin{thebibliography}{52}
\providecommand{\natexlab}[1]{#1}
\providecommand{\url}[1]{\texttt{#1}}
\expandafter\ifx\csname urlstyle\endcsname\relax
  \providecommand{\doi}[1]{doi: #1}\else
  \providecommand{\doi}{doi: \begingroup \urlstyle{rm}\Url}\fi

\bibitem[Andrychowicz et~al.(2017)Andrychowicz, Wolski, Ray, Schneider, Fong,
  Welinder, McGrew, Tobin, Pieter~Abbeel, and
  Zaremba]{andrychowicz2018hindsight}
Marcin Andrychowicz, Filip Wolski, Alex Ray, Jonas Schneider, Rachel Fong,
  Peter Welinder, Bob McGrew, Josh Tobin, OpenAI Pieter~Abbeel, and Wojciech
  Zaremba.
\newblock Hindsight experience replay.
\newblock In I.~Guyon, U.~V. Luxburg, S.~Bengio, H.~Wallach, R.~Fergus,
  S.~Vishwanathan, and R.~Garnett (eds.), \emph{Advances in Neural Information
  Processing Systems}, volume~30. Curran Associates, Inc., 2017.
\newblock URL
  \url{https://proceedings.neurips.cc/paper/2017/file/453fadbd8a1a3af50a9df4df899537b5-Paper.pdf}.

\bibitem[Baker et~al.(2019)Baker, Kanitscheider, Markov, Wu, Powell, McGrew,
  and Mordatch]{baker2020emergent}
Bowen Baker, Ingmar Kanitscheider, Todor Markov, Yi~Wu, Glenn Powell, Bob
  McGrew, and Igor Mordatch.
\newblock Emergent tool use from multi-agent autocurricula.
\newblock In \emph{International Conference on Learning Representations}, 2019.

\bibitem[Bansal et~al.(2018)Bansal, Pachocki, Sidor, Sutskever, and
  Mordatch]{bansal2018emergent}
Trapit Bansal, Jakub Pachocki, Szymon Sidor, Ilya Sutskever, and Igor Mordatch.
\newblock Emergent complexity via multi-agent competition.
\newblock In \emph{International Conference on Learning Representations}, 2018.
\newblock URL \url{https://openreview.net/forum?id=Sy0GnUxCb}.

\bibitem[Bengio et~al.(2009)Bengio, Louradour, Collobert, and
  Weston]{bengio2009curriculum}
Yoshua Bengio, J{\'e}r{\^o}me Louradour, Ronan Collobert, and Jason Weston.
\newblock Curriculum learning.
\newblock In \emph{Proceedings of the 26th annual international conference on
  machine learning}, pp.\  41--48, 2009.

\bibitem[Campero et~al.(2021)Campero, Raileanu, Kuttler, Tenenbaum,
  Rockt{\"a}schel, and Grefenstette]{campero2021learning}
Andres Campero, Roberta Raileanu, Heinrich Kuttler, Joshua~B. Tenenbaum, Tim
  Rockt{\"a}schel, and Edward Grefenstette.
\newblock Learning with {\{}amig{\}}o: Adversarially motivated intrinsic goals.
\newblock In \emph{International Conference on Learning Representations}, 2021.
\newblock URL \url{https://openreview.net/forum?id=ETBc_MIMgoX}.

\bibitem[Dennis et~al.(2020)Dennis, Jaques, Vinitsky, Bayen, Russell, Critch,
  and Levine]{dennis2021emergent}
Michael Dennis, Natasha Jaques, Eugene Vinitsky, Alexandre Bayen, Stuart
  Russell, Andrew Critch, and Sergey Levine.
\newblock Emergent complexity and zero-shot transfer via unsupervised
  environment design.
\newblock In H.~Larochelle, M.~Ranzato, R.~Hadsell, M.~F. Balcan, and H.~Lin
  (eds.), \emph{Advances in Neural Information Processing Systems}, volume~33,
  pp.\  13049--13061. Curran Associates, Inc., 2020.
\newblock URL
  \url{https://proceedings.neurips.cc/paper/2020/file/985e9a46e10005356bbaf194249f6856-Paper.pdf}.

\bibitem[Eysenbach et~al.(2018)Eysenbach, Gupta, Ibarz, and
  Levine]{eysenbach2018diversity}
Benjamin Eysenbach, Abhishek Gupta, Julian Ibarz, and Sergey Levine.
\newblock Diversity is all you need: Learning skills without a reward function.
\newblock In \emph{International Conference on Learning Representations}, 2018.

\bibitem[Florensa et~al.(2018)Florensa, Held, Geng, and
  Abbeel]{florensa2018automatic}
Carlos Florensa, David Held, Xinyang Geng, and Pieter Abbeel.
\newblock Automatic goal generation for reinforcement learning agents.
\newblock In Jennifer Dy and Andreas Krause (eds.), \emph{Proceedings of the
  35th International Conference on Machine Learning}, volume~80 of
  \emph{Proceedings of Machine Learning Research}, pp.\  1515--1528. PMLR,
  10--15 Jul 2018.

\bibitem[French(1999)]{french1999catastrophic}
Robert~M French.
\newblock Catastrophic forgetting in connectionist networks.
\newblock \emph{Trends in cognitive sciences}, 3\penalty0 (4):\penalty0
  128--135, 1999.

\bibitem[Glicksberg(1952)]{glicksberg1952further}
Irving~L Glicksberg.
\newblock A further generalization of the kakutani fixed point theorem, with
  application to nash equilibrium points.
\newblock \emph{Proceedings of the American Mathematical Society}, 3\penalty0
  (1):\penalty0 170--174, 1952.

\bibitem[Graves et~al.(2017)Graves, Bellemare, Menick, Munos, and
  Kavukcuoglu]{graves2017automated}
Alex Graves, Marc~G Bellemare, Jacob Menick, Remi Munos, and Koray Kavukcuoglu.
\newblock Automated curriculum learning for neural networks.
\newblock In \emph{international conference on machine learning}, pp.\
  1311--1320. PMLR, 2017.

\bibitem[Grover et~al.(2018{\natexlab{a}})Grover, Al-Shedivat, Gupta, Burda,
  and Edwards]{grover2018learning}
Aditya Grover, Maruan Al-Shedivat, Jayesh Gupta, Yuri Burda, and Harrison
  Edwards.
\newblock Learning policy representations in multiagent systems.
\newblock In \emph{International conference on machine learning}, pp.\
  1802--1811. PMLR, 2018{\natexlab{a}}.

\bibitem[Grover et~al.(2018{\natexlab{b}})Grover, Al-Shedivat, Gupta, Burda,
  and Edwards]{grover2018evaluating}
Aditya Grover, Maruan Al-Shedivat, Jayesh~K Gupta, Yuri Burda, and Harrison
  Edwards.
\newblock Evaluating generalization in multiagent systems using
  agent-interaction graphs.
\newblock In \emph{Proceedings of the 17th International Conference on
  Autonomous Agents and MultiAgent Systems}, pp.\  1944--1946,
  2018{\natexlab{b}}.

\bibitem[Gupta et~al.(2020)Gupta, Eysenbach, Finn, and
  Levine]{gupta2020unsupervised}
Abhishek Gupta, Benjamin Eysenbach, Chelsea Finn, and Sergey Levine.
\newblock Unsupervised meta-learning for reinforcement learning, 2020.

\bibitem[Haarnoja et~al.(2018)Haarnoja, Zhou, Abbeel, and
  Levine]{haarnoja2018soft}
Tuomas Haarnoja, Aurick Zhou, Pieter Abbeel, and Sergey Levine.
\newblock Soft actor-critic: Off-policy maximum entropy deep reinforcement
  learning with a stochastic actor.
\newblock In Jennifer Dy and Andreas Krause (eds.), \emph{Proceedings of the
  35th International Conference on Machine Learning}, volume~80 of
  \emph{Proceedings of Machine Learning Research}, pp.\  1861--1870. PMLR,
  10--15 Jul 2018.

\bibitem[Hausman et~al.(2018)Hausman, Springenberg, Wang, Heess, and
  Riedmiller]{hausman2018learning}
Karol Hausman, Jost~Tobias Springenberg, Ziyu Wang, Nicolas Heess, and Martin
  Riedmiller.
\newblock Learning an embedding space for transferable robot skills.
\newblock In \emph{International Conference on Learning Representations}, 2018.
\newblock URL \url{https://openreview.net/forum?id=rk07ZXZRb}.

\bibitem[Justesen et~al.(2018)Justesen, Torrado, Bontrager, Khalifa, Togelius,
  and Risi]{justesen2018illuminating}
Niels Justesen, Ruben~Rodriguez Torrado, Philip Bontrager, Ahmed Khalifa,
  Julian Togelius, and Sebastian Risi.
\newblock Illuminating generalization in deep reinforcement learning through
  procedural level generation, 2018.

\bibitem[Kingma \& Ba(2015)Kingma and Ba]{kingma2017adam}
Diederik~P. Kingma and Jimmy Ba.
\newblock Adam: {A} method for stochastic optimization.
\newblock In Yoshua Bengio and Yann LeCun (eds.), \emph{3rd International
  Conference on Learning Representations, {ICLR} 2015, San Diego, CA, USA, May
  7-9, 2015, Conference Track Proceedings}, 2015.
\newblock URL \url{http://arxiv.org/abs/1412.6980}.

\bibitem[Kirkpatrick et~al.(2017)Kirkpatrick, Pascanu, Rabinowitz, Veness,
  Desjardins, Rusu, Milan, Quan, Ramalho, Grabska-Barwinska,
  et~al.]{kirkpatrick2017overcoming}
James Kirkpatrick, Razvan Pascanu, Neil Rabinowitz, Joel Veness, Guillaume
  Desjardins, Andrei~A Rusu, Kieran Milan, John Quan, Tiago Ramalho, Agnieszka
  Grabska-Barwinska, et~al.
\newblock Overcoming catastrophic forgetting in neural networks.
\newblock \emph{Proceedings of the national academy of sciences}, 114\penalty0
  (13):\penalty0 3521--3526, 2017.

\bibitem[Kostrikov(2018)]{pytorchrl}
Ilya Kostrikov.
\newblock Pytorch implementations of reinforcement learning algorithms.
\newblock \url{https://github.com/ikostrikov/pytorch-a2c-ppo-acktr-gail}, 2018.

\bibitem[Leibo et~al.(2019)Leibo, Hughes, Lanctot, and
  Graepel]{leibo2019autocurricula}
Joel~Z. Leibo, Edward Hughes, Marc Lanctot, and Thore Graepel.
\newblock Autocurricula and the emergence of innovation from social
  interaction: A manifesto for multi-agent intelligence research, 2019.

\bibitem[Li et~al.(2020)Li, Pinto, and Abbeel]{li2020generalized}
Alexander Li, Lerrel Pinto, and Pieter Abbeel.
\newblock Generalized hindsight for reinforcement learning.
\newblock \emph{Advances in Neural Information Processing Systems}, 33, 2020.

\bibitem[Liu et~al.(2018)Liu, Trott, Socher, and Xiong]{liu2019competitive}
Hao Liu, Alexander Trott, Richard Socher, and Caiming Xiong.
\newblock Competitive experience replay.
\newblock In \emph{International Conference on Learning Representations}, 2018.

\bibitem[Lynch et~al.(2020)Lynch, Khansari, Xiao, Kumar, Tompson, Levine, and
  Sermanet]{pmlr-v100-lynch20a}
Corey Lynch, Mohi Khansari, Ted Xiao, Vikash Kumar, Jonathan Tompson, Sergey
  Levine, and Pierre Sermanet.
\newblock Learning latent plans from play.
\newblock In Leslie~Pack Kaelbling, Danica Kragic, and Komei Sugiura (eds.),
  \emph{Proceedings of the Conference on Robot Learning}, volume 100 of
  \emph{Proceedings of Machine Learning Research}, pp.\  1113--1132. PMLR, 30
  Oct--01 Nov 2020.
\newblock URL \url{http://proceedings.mlr.press/v100/lynch20a.html}.

\bibitem[Matiisen et~al.(2019)Matiisen, Oliver, Cohen, and
  Schulman]{matiisen2017teacherstudent}
Tambet Matiisen, Avital Oliver, Taco Cohen, and John Schulman.
\newblock Teacher--student curriculum learning.
\newblock \emph{IEEE transactions on neural networks and learning systems},
  31\penalty0 (9):\penalty0 3732--3740, 2019.

\bibitem[Nair et~al.(2018)Nair, Pong, Dalal, Bahl, Lin, and
  Levine]{nair2018visual}
Ashvin~V Nair, Vitchyr Pong, Murtaza Dalal, Shikhar Bahl, Steven Lin, and
  Sergey Levine.
\newblock Visual reinforcement learning with imagined goals.
\newblock \emph{Advances in Neural Information Processing Systems},
  31:\penalty0 9191--9200, 2018.

\bibitem[Nash(1951)]{nash1951non}
John Nash.
\newblock Non-cooperative games.
\newblock \emph{Annals of mathematics}, pp.\  286--295, 1951.

\bibitem[OpenAI(2020)]{robogym2020}
OpenAI.
\newblock {Robogym}.
\newblock \url{https://github.com/openai/robogym}, 2020.

\bibitem[OpenAI et~al.(2019)OpenAI, Akkaya, Andrychowicz, Chociej, Litwin,
  McGrew, Petron, Paino, Plappert, Powell, Ribas, Schneider, Tezak, Tworek,
  Welinder, Weng, Yuan, Zaremba, and Zhang]{openai2019solving}
OpenAI, Ilge Akkaya, Marcin Andrychowicz, Maciek Chociej, Mateusz Litwin, Bob
  McGrew, Arthur Petron, Alex Paino, Matthias Plappert, Glenn Powell, Raphael
  Ribas, Jonas Schneider, Nikolas Tezak, Jerry Tworek, Peter Welinder, Lilian
  Weng, Qiming Yuan, Wojciech Zaremba, and Lei Zhang.
\newblock Solving rubik's cube with a robot hand, 2019.

\bibitem[OpenAI et~al.(2021)OpenAI, Plappert, Sampedro, Xu, Akkaya, Kosaraju,
  Welinder, D'Sa, Petron, d.~O.~Pinto, Paino, Noh, Weng, Yuan, Chu, and
  Zaremba]{openai2021asymmetric}
OpenAI OpenAI, Matthias Plappert, Raul Sampedro, Tao Xu, Ilge Akkaya, Vineet
  Kosaraju, Peter Welinder, Ruben D'Sa, Arthur Petron, Henrique~P. d.~O.~Pinto,
  Alex Paino, Hyeonwoo Noh, Lilian Weng, Qiming Yuan, Casey Chu, and Wojciech
  Zaremba.
\newblock Asymmetric self-play for automatic goal discovery in robotic
  manipulation, 2021.

\bibitem[Plappert et~al.(2018)Plappert, Andrychowicz, Ray, McGrew, Baker,
  Powell, Schneider, Tobin, Chociej, Welinder, Kumar, and Zaremba]{1802.09464}
Matthias Plappert, Marcin Andrychowicz, Alex Ray, Bob McGrew, Bowen Baker,
  Glenn Powell, Jonas Schneider, Josh Tobin, Maciek Chociej, Peter Welinder,
  Vikash Kumar, and Wojciech Zaremba.
\newblock Multi-goal reinforcement learning: Challenging robotics environments
  and request for research, 2018.

\bibitem[Portelas et~al.(2020)Portelas, Colas, Weng, Hofmann, and
  Oudeyer]{portelas2020automatic}
R{\'e}my Portelas, C{\'e}dric Colas, Lilian Weng, Katja Hofmann, and
  Pierre-Yves Oudeyer.
\newblock Automatic curriculum learning for deep rl: A short survey.
\newblock \emph{arXiv preprint arXiv:2003.04664}, 2020.

\bibitem[Portelas et~al.(2019)Portelas, Colas, Hofmann, and
  Oudeyer]{portelas2019teacher}
Rémy Portelas, Cédric Colas, Katja Hofmann, and Pierre-Yves Oudeyer.
\newblock Teacher algorithms for curriculum learning of deep rl in continuously
  parameterized environments, 2019.

\bibitem[Rabinowitz et~al.(2018)Rabinowitz, Perbet, Song, Zhang, Eslami, and
  Botvinick]{rabinowitz2018machine}
Neil Rabinowitz, Frank Perbet, Francis Song, Chiyuan Zhang, SM~Ali Eslami, and
  Matthew Botvinick.
\newblock Machine theory of mind.
\newblock In \emph{International conference on machine learning}, pp.\
  4218--4227. PMLR, 2018.

\bibitem[Racaniere et~al.(2020)Racaniere, Lampinen, Santoro, Reichert, Firoiu,
  and Lillicrap]{racaniere2020automated}
Sebastien Racaniere, Andrew Lampinen, Adam Santoro, David Reichert, Vlad
  Firoiu, and Timothy Lillicrap.
\newblock Automated curriculum generation through setter-solver interactions.
\newblock In \emph{International Conference on Learning Representations}, 2020.
\newblock URL \url{https://openreview.net/forum?id=H1e0Wp4KvH}.

\bibitem[Schrittwieser et~al.(2020)Schrittwieser, Antonoglou, Hubert, Simonyan,
  Sifre, Schmitt, Guez, Lockhart, Hassabis, Graepel, and
  et~al.]{Schrittwieser_2020}
Julian Schrittwieser, Ioannis Antonoglou, Thomas Hubert, Karen Simonyan,
  Laurent Sifre, Simon Schmitt, Arthur Guez, Edward Lockhart, Demis Hassabis,
  Thore Graepel, and et~al.
\newblock Mastering atari, go, chess and shogi by planning with a learned
  model.
\newblock \emph{Nature}, 588\penalty0 (7839):\penalty0 604–609, Dec 2020.
\newblock ISSN 1476-4687.
\newblock \doi{10.1038/s41586-020-03051-4}.
\newblock URL \url{http://dx.doi.org/10.1038/s41586-020-03051-4}.

\bibitem[Schulman et~al.(2017)Schulman, Wolski, Dhariwal, Radford, and
  Klimov]{schulman2017proximal}
John Schulman, Filip Wolski, Prafulla Dhariwal, Alec Radford, and Oleg Klimov.
\newblock Proximal policy optimization algorithms, 2017.

\bibitem[Schulman et~al.(2018)Schulman, Moritz, Levine, Jordan, and
  Abbeel]{schulman2018highdimensional}
John Schulman, Philipp Moritz, Sergey Levine, Michael Jordan, and Pieter
  Abbeel.
\newblock High-dimensional continuous control using generalized advantage
  estimation, 2018.

\bibitem[Sharma et~al.(2019)Sharma, Gu, Levine, Kumar, and
  Hausman]{sharma2020dynamicsaware}
Archit Sharma, Shixiang Gu, Sergey Levine, Vikash Kumar, and Karol Hausman.
\newblock Dynamics-aware unsupervised discovery of skills.
\newblock In \emph{International Conference on Learning Representations}, 2019.

\bibitem[Silver et~al.(2016)Silver, Huang, Maddison, Guez, Sifre, Van
  Den~Driessche, Schrittwieser, Antonoglou, Panneershelvam, Lanctot,
  et~al.]{silver2016mastering}
David Silver, Aja Huang, Chris~J Maddison, Arthur Guez, Laurent Sifre, George
  Van Den~Driessche, Julian Schrittwieser, Ioannis Antonoglou, Veda
  Panneershelvam, Marc Lanctot, et~al.
\newblock Mastering the game of go with deep neural networks and tree search.
\newblock \emph{nature}, 529\penalty0 (7587):\penalty0 484--489, 2016.

\bibitem[Sodhani \& Pahuja(2018)Sodhani and Pahuja]{sodhani2018memory}
Shagun Sodhani and Vardaan Pahuja.
\newblock Memory augmented self-play, 2018.

\bibitem[Sukhbaatar et~al.(2018{\natexlab{a}})Sukhbaatar, Denton, Szlam, and
  Fergus]{sukhbaatar2018learning}
Sainbayar Sukhbaatar, Emily Denton, Arthur Szlam, and Rob Fergus.
\newblock Learning goal embeddings via self-play for hierarchical reinforcement
  learning, 2018{\natexlab{a}}.

\bibitem[Sukhbaatar et~al.(2018{\natexlab{b}})Sukhbaatar, Lin, Kostrikov,
  Synnaeve, Szlam, and Fergus]{sukhbaatar2018intrinsic}
Sainbayar Sukhbaatar, Zeming Lin, Ilya Kostrikov, Gabriel Synnaeve, Arthur
  Szlam, and Rob Fergus.
\newblock Intrinsic motivation and automatic curricula via asymmetric
  self-play.
\newblock In \emph{International Conference on Learning Representations},
  2018{\natexlab{b}}.

\bibitem[Tassa et~al.(2020)Tassa, Tunyasuvunakool, Muldal, Doron, Liu, Bohez,
  Merel, Erez, Lillicrap, and Heess]{tassa2018deepmind}
Yuval Tassa, Saran Tunyasuvunakool, Alistair Muldal, Yotam Doron, Siqi Liu,
  Steven Bohez, Josh Merel, Tom Erez, Timothy Lillicrap, and Nicolas Heess.
\newblock dm\_control: Software and tasks for continuous control, 2020.

\bibitem[Tobin et~al.(2017)Tobin, Fong, Ray, Schneider, Zaremba, and
  Abbeel]{tobin2017domain}
Josh Tobin, Rachel Fong, Alex Ray, Jonas Schneider, Wojciech Zaremba, and
  Pieter Abbeel.
\newblock Domain randomization for transferring deep neural networks from
  simulation to the real world.
\newblock In \emph{2017 IEEE/RSJ international conference on intelligent robots
  and systems (IROS)}, pp.\  23--30. IEEE, 2017.

\bibitem[Todorov et~al.(2012)Todorov, Erez, and Tassa]{6386109}
Emanuel Todorov, Tom Erez, and Yuval Tassa.
\newblock Mujoco: A physics engine for model-based control.
\newblock In \emph{2012 IEEE/RSJ International Conference on Intelligent Robots
  and Systems}, pp.\  5026--5033, 2012.
\newblock \doi{10.1109/IROS.2012.6386109}.

\bibitem[Tychonoff(1930)]{tychonoff1930topologische}
Andrei Tychonoff.
\newblock {\"U}ber die topologische erweiterung von r{\"a}umen.
\newblock \emph{Mathematische Annalen}, 102\penalty0 (1):\penalty0 544--561,
  1930.

\bibitem[Wang et~al.(2019)Wang, Lehman, Clune, and Stanley]{wang2019paired}
Rui Wang, Joel Lehman, Jeff Clune, and Kenneth~O. Stanley.
\newblock Paired open-ended trailblazer (poet): Endlessly generating
  increasingly complex and diverse learning environments and their solutions,
  2019.

\bibitem[Watkins \& Dayan(1992)Watkins and Dayan]{watkins1992q}
Christopher~JCH Watkins and Peter Dayan.
\newblock Q-learning.
\newblock \emph{Machine learning}, 8\penalty0 (3-4):\penalty0 279--292, 1992.

\bibitem[Willard(2012)]{willard2012general}
Stephen Willard.
\newblock \emph{General topology}.
\newblock Courier Corporation, 2012.

\bibitem[Yarats \& Kostrikov(2020)Yarats and Kostrikov]{pytorch_sac}
Denis Yarats and Ilya Kostrikov.
\newblock Soft actor-critic (sac) implementation in pytorch.
\newblock \url{https://github.com/denisyarats/pytorch_sac}, 2020.

\bibitem[Zhang et~al.(2020)Zhang, Abbeel, and Pinto]{zhang2020automatic}
Yunzhi Zhang, Pieter Abbeel, and Lerrel Pinto.
\newblock Automatic curriculum learning through value disagreement.
\newblock \emph{Advances in Neural Information Processing Systems}, 33, 2020.

\end{thebibliography}
\bibliographystyle{iclr2022_conference}

\appendix
\newpage
\section{Desired Goal Generation Qualities}\label{app:goalgenqualities}
 We begin by qualitatively examining the key properties of the curriculum of goals generated in Point Mass Obstacle. We see that qualitatively our method (Fig. \ref{fig:goal_ours}) proposes goals with more progressive exploration than either ASP (Fig. \ref{fig:goal_asp}) (which seems to get stuck on the walls, limited by Alice's ability to explore) or GoalGAN (Fig. \ref{fig:goal_gan}) (which seems to get stuck more along the borders of the space, only prioritizing difficult goals). The DR generator (Fig. \ref{fig:goal_rand}) has the opposite result, having great coverage but without any consideration for feasibility or difficulty. Our method manages to balance the exploration with progressive feasibility, where we see CuSP proposing goals more towards the corners at first, then progressively moving towards the center and covering the space more thoroughly, with slightly more density in the top right where the maze is. We hypothesize there is more concentration in the corners initially when the goal generators are untrained and proposing goals towards the edges of the goal constraints. 

\begin{figure}[h!]
\begin{subfigure}{.24\textwidth}
\includegraphics[width=\textwidth]{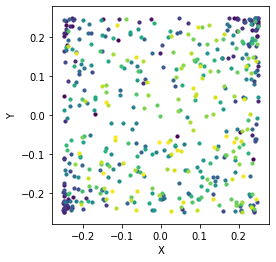}
\caption{CuSP}
\label{fig:goal_ours}
\end{subfigure}
\begin{subfigure}{.24\textwidth}
\includegraphics[width=\textwidth]{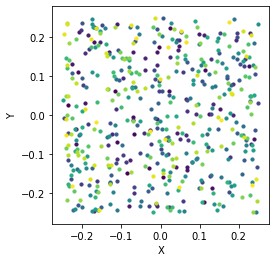}
\caption{Domain Randomization}
\label{fig:goal_rand}
\end{subfigure}
\begin{subfigure}{.24\textwidth}
\includegraphics[width=\textwidth]{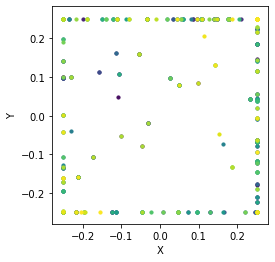}
\caption{GoalGAN}
\label{fig:goal_gan}
\end{subfigure}
\begin{subfigure}{.24\textwidth}
\includegraphics[width=\textwidth]{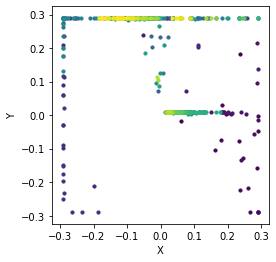}
\caption{ASP+BC}
\label{fig:goal_asp}
\end{subfigure}

\centering
\includegraphics[width=.3\linewidth]{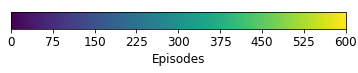}

\caption{Progressive goal generation plots for Point Mass Obstacle after 500 rounds across all four methods. We see more progressive diversity from CuSP, whereas GoalGAN mostly stays on the outer border and ASP+BC gets stuck on the environment walls, limited by Alice's abilities. 
}
\label{fig:sim-goalgen}
\end{figure}

\section{Experiments}

\textbf{Compute.} To train each method in each environment, we use 1 NVIDIA V100 GPU per seed in an internal cluster. This amounts to an approximate total of $400$ GPUs for the final evaluations and hyperparameter sweeps. We additionally did extensive prototyping on the Point Mass environment before scaling to the more complex environments, but do not have a reasonable estimate for how much compute was expended in that phase of the project.

\subsection{Toy Experiments}\label{app:toy}
For the toy evaluations in Figures \ref{fig:stationary} and \ref{fig:nonstationary}, we evaluate on the landscape 
\begin{align*}
    \text{regret}(x,y) = \begin{cases}
    -((x + .1)^2 + (y-.1)^2) & \text{if} (x + .1)^2 + (y-.1)^2 < .01\\
    -.01 & \text{otherwise}
    \end{cases}
\end{align*}

For Adam, we use the default configuration from PyTorch (lr=0.001, $\beta$s = (0.9, 0.999), $\epsilon$=1e-8, no weight decay). For PPO, we use the default configuration from \citep{pytorchrl}. We found our trends to hold robustly for a fairly broad choice of lr's and both with/without GAE \citep{schulman2018highdimensional}.  In the toy experiments, we carry out 5000 steps of goal proposal with one gradient update for every method at each step. 
For the non-stationary experiment in Figure \ref{fig:nonstationary}, we diagonally perturb the center of the regret landscape from top left quadrant (initialized at (-0.1, 0.1)) to the bottom right quadrant at a constant rate of 2e-4 every round. Hence, at the end of 2500 rounds, the center of the landscape is at the origin and at the end of 5000 rounds, the center of the landscape is at (0.1, -0.1).

\subsection{Environment Specifics}\label{app:envs}
Table \ref{tab:env_rews} describes the reward configurations we used in our environments. For the manipulation environments, we also incorporated HER \citep{andrychowicz2018hindsight}. $\epsilon$ is the tolerance for success (i.e. a goal $g$ is successfully achieved at state $s$ if the normed distance $L_2(s-g) < \epsilon$).  We terminate the episode early if successful.

\begin{table}[h!]
    \centering
    \begin{tabular}{c|c|c|c}
      Env   &  Reward & Use HER \citep{andrychowicz2018hindsight} & $\epsilon$ \\ \hline
     Point Mass Maze    & -$L_2$ &  & .05 \\
     Walker & -$L_2$ & &.1\\
     Reach & -$L_2$ & \checkmark & .1\\
     Toss & -$L_2$ & \checkmark & .1\\
     Pick and Reach & 0 if at goal, -1 otherwise & \checkmark& .1
    \end{tabular}
    \caption{Environment Reward Configurations}
    \label{tab:env_rews}
\end{table}

We modify the default Point Mass environment by adding in walls to the top right corner and initialize the agent in between the walls 10\% of the time to help with exploration. We modify the manipulator environment by always initializing with the ball in the gripper and with the gripper in an upright position. We use an episode length of 1000 steps for Point Mass Obstacle and Walker and an episode length of 100 steps for Reach, Toss, and Pick. 

Specific definitions of in-distribution and out-of-distribution goal spaces for each environment are in Table \ref{tab:goals}.

\begin{table}[H]
    \centering
    \begin{tabular}{c|c|c}
        Env & $\mathcal{G}_{id}$ & $\mathcal{G}_{ood}$  \\\hline
        Point Mass Maze &$[-.25, .25] \times  [-.25, .25]$ & $[-.3, .3] \times  [-.3, .3]$\\
        Walker &$[-1, 10] \times  [-.2, .2]$ &  $[10, 20] \times  [-.2, .2]$\\
        Reach &$[-.25, .25] \times  [.2, .6]$ & $[-.4, .4] \times  [.1, .8]$\\
        Toss & $[-.5, .5] \times  [.5, 1.3 ]$& $[-.6, .6] \times  [.5, 1.4]$\\
        Pick and Reach &$[.5, .9] \times  [.5, .7]$ & $[.4, 1] \times  [.5, .8]$\\
    \end{tabular}
    \caption{Goal spaces for each environment.}
    \label{tab:goals}
\end{table}

\subsection{Hyperparameters}\label{app:hyperparams}
We train the learners and goal generators using the default SAC configuration and implementation from \citep{pytorch_sac}, as listed in Table \ref{tab:pytorchsac}, with separate networks for the actor and critic.

\begin{table}[h!]
    \centering
    \begin{tabular}{c|c}
       Parameter  & Value  \\ \hline
        $\gamma$ & .99 \\
        Initial $\alpha$ & .1 \\
        $\alpha$ LR & 1e-4 \\
        Actor LR & 1e-4 \\ 
        Critic LR & 1e-4 \\
        Batch size & 1024 \\ \hline 
        Critic hidden dim & 1024 \\ 
        Critic hidden depth & 2 \\
        \hline
        Actor hidden dim & 1024 \\ 
        Actor hidden depth & 2 \\
        \hline
    \end{tabular}
    \caption{SAC Hyperparameters}
    \label{tab:pytorchsac}
\end{table}

To scale the output of the goal generators to the correct action space, we take the Tanh scaled output and rescale each dimension to the goal space. We use Appendix C of \citep{haarnoja2018soft} to modify the log loss for SAC accordingly. To train the goal generator, we update after each round (i.e. per goal proposed) with 100 gradient updates. As the learners are updated once per time-step in the environment (100-1000 updates depending on the environment), we increased the update frequency for the goal-generator to keep pace with the learner updates since the goal generator operates on a single time step per episode. 

\subsection{Baselines}\label{app:baselines}
We reimplement each baseline by comparing against any available public code and matching each paper's recommended training hyperparameters as closely as possible within our compute limitations. 

\textbf{Off vs. On Policy Learners}
In ASP+BC\citep{openai2021asymmetric}, Alice and Bob are trained using PPO. We also implemented this as a baseline but found that the agents were unable to reach high success at the same rate as our SAC-trained agents. For a comparison that's more favorable to the baselines, we report the success rates from training with the same agent configuration (ie. Bob trained with SAC) in the main paper to isolate learning differences to the goal generation method.

\subsection{Ablations}
\subsubsection{CuSP Ablations}\label{sec:cusp-abl}

\textbf{SAC Motivation} Without symmetrization, the regret-based goal generation objective is the same as the proposed objective in PAIRED \cite{dennis2021emergent}. However, our contribution extends beyond the symmetrized objective as our proposed method reframes the goal generation process into a entropy-regularized agent with a memory buffer -- which motivated designing our method around SAC. PAIRED uses PPO to optimize their environment generators, which we found was not sample efficient enough for our tasks (i.e. success rates were too low), as our toy experiment in Figure \ref{fig:stationary} suggested.

Here we highlight a case study with the Toss task into why the entropy regularization and replay buffer components are important. Building up from just the regret objective as in PAIRED, we have in Figure \ref{fig:toss_abl}:
\begin{enumerate}
    \item $\beta=1, \alpha=0$, no replay buffer -- Regret objective only for a single goal generator, as in PAIRED.
    \item $\beta=1, \alpha=0$, no replay buffer, symmetrized -- Regret objective with two symmetrized goal generators. Symmetrization seems to improve performance slightly, but is not sufficient alone.
    \item $\beta=1, \alpha=0$ -- Regret objective with a replay buffer for the goal generator. Also improves performance slightly, although not as much as symmetrization. 
    \item $\beta=1$ -- Regret objective with a replay buffer and entropy regularization. This is particularly helpful for the random OOD evaluation as the greater goal diversity improves multi-goal performance on the multi-goal evaluation task, as expected.
    \item $\beta=1$ symmetrized -- Regret objective with a replay buffer and entropy regularization, and symmetrized goal generators. 
    \item $T=300,\beta=.1$, symmetrized (Full CuSP) -- Regret objective with a dynamically updated replay buffer and entropy regularization, and symmetrized goal generators. 
\end{enumerate}

\begin{figure}[H]
    \centering
    \includegraphics[scale=.5]{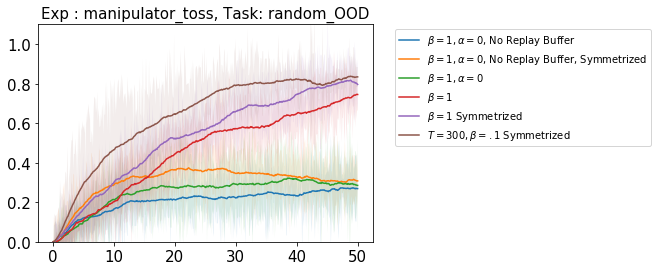}
    \includegraphics[scale=.5]{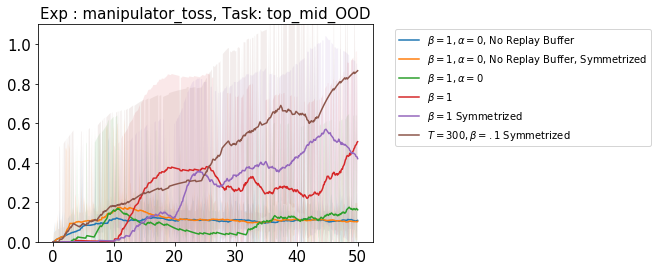}
    \caption{Full ablations of all components of CuSP from Table \ref{tab:our_contributions}. Success rates are averaged across 3 seeds.}
    \label{fig:toss_abl}
\end{figure}

Looking at a sample of the  generated goals in Figure \ref{fig:goals_ent}, we find a stark qualitative improvement in goal diversity between the regret-only PAIRED condition and CuSP. In particular, in the baseline the generated goals converge only on the corners.

\begin{figure}[H]
    \centering
    \includegraphics[scale=.3]{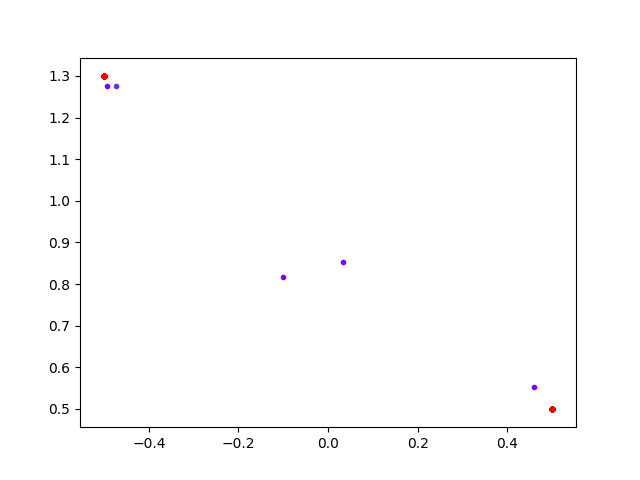}
    \includegraphics[scale=.3]{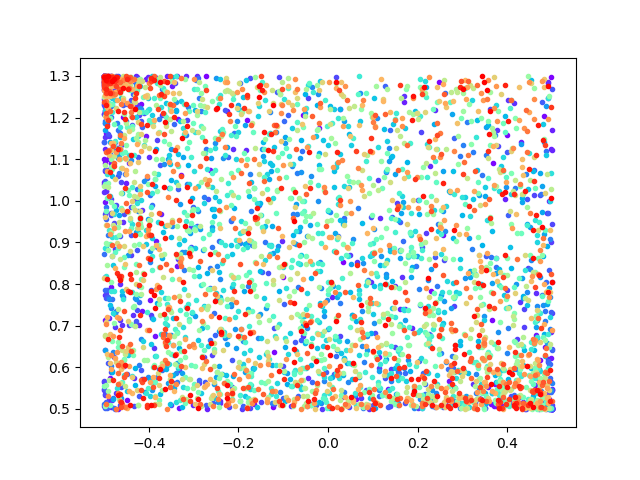}
    \hspace{-1in}
    \includegraphics[scale=.3]{figures/toy_colorbar.png}
    \caption{Goal plots after 2000 rounds for Toss. The left plot shows the goals generated in the $\beta=1, \alpha=0$, No Replay Buffer condition and the right plot shows the goals generated in the $T=300, \beta=.1$, Symmetrized condition from Figure \ref{fig:toss_abl}. Red indicates more recent goals.}
    \label{fig:goals_ent}
\end{figure}

\textbf{Regret update and Symmetrization Ablations}
As we use the learners' critics for estimating the current regret for a particular goal, we need to be careful about how we use the critic data since it can be inaccurate, especially towards the beginning of training. We introduce two hyperparameters: one for specifying at which episode we begin regret updates $T$, and a weighting parameter $\beta$ for how much to update the regrets by. We use the update rule 
\begin{align*}
    \text{regret} = \beta \cdot \text{old regret} + (1-\beta) \cdot \text{new regret}
\end{align*}
We sweep across $T = 50, 100, 200$ and $\beta =.1, .5, .9$ to select the best performing hyperparameters based on average performance on the training goal set, $\mathcal{G}_{id}$.
In Figure \ref{fig:abl-skills}, we ablate over the different components of our method: CuSP+SAC with no entropy regularization ($\beta=1, \alpha=0$), CuSP+SAC only ($\beta=1$), CuSP+SAC+Symmetrization ($\beta=1$), and CuSP+SAC+Regret Updates+Symmetrization to investigate the effects of each component. In most of the tasks, aside from Reach on random out of distribution goals, symmetrization improves performance. In particular, we see larger gains on tasks where $\beta=1$ performance is lower to begin with, such as Walker. Further gains from stale regret updates vary from being very small on tasks where $\beta=1$ performance is already high (eg. for Point Mass Obstacles or Toss on random OOD goals), to larger improvements on the harder skill tasks (eg. Behind Obstacles or Walker Far Right goals).  Interestingly, we also note that while entropy regularization is crucial for most of the environments, setting $\alpha=0$ seems to be helpful for Walker.

\begin{figure}[h!]
\centering
\begin{minipage}{.95\textwidth}
\hspace{-.3in}
\begin{tikzpicture}
  \node[xshift=-1cm] (img)  {\includegraphics[width=.5\linewidth]{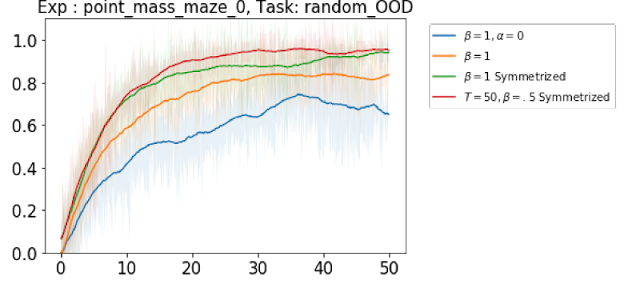}};
    \node[above=of img, yshift=-1.1cm, xshift=-1cm] {Point Mass Obstacles};
    \node[left=of img, node distance=0cm, rotate=90,yshift=-.7cm,xshift=1.1cm] {Success Rate};
  \node[right=of img, xshift=-1cm] (img)  {\includegraphics[width=.5\linewidth]{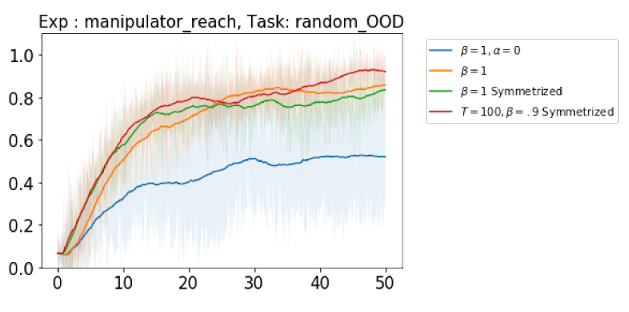}};
    \node[above=of img, yshift=-1.1cm, xshift=-1cm] {Reach};

 \end{tikzpicture}
 \end{minipage}
 
 \begin{minipage}{.95\textwidth}
\hspace{-.3in}
\begin{tikzpicture}
  \node[xshift=-1cm] (img)  {\includegraphics[width=.5\linewidth]{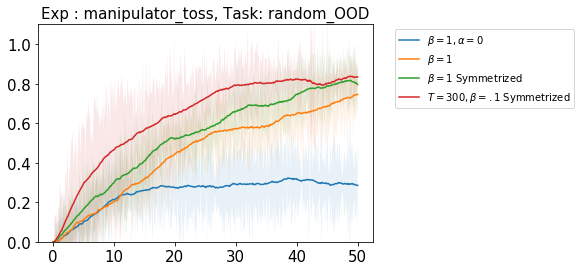}};
    \node[above=of img, yshift=-1.1cm, xshift=-1cm] {Toss};
    \node[left=of img, node distance=0cm, rotate=90,yshift=-.7cm,xshift=1.1cm] {Success Rate};
  \node[right=of img, xshift=-1cm] (img)  {\includegraphics[width=.5\linewidth]{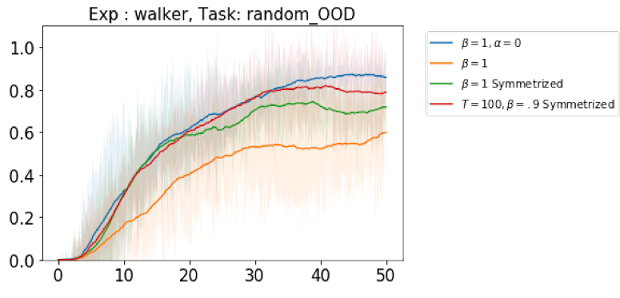}};
    \node[above=of img, yshift=-1.1cm, xshift=-1cm] {Walker};

 \end{tikzpicture}
 \end{minipage}
 





\caption{Success rates across environments on random goals sampled from $\mathcal{G}_{ood}$, ablating for SAC-based goal generators, symmetrization and stale regret updates. We generally find that having both components improve success rates across the tasks, although the gains vary depending on how challenging the environment is to begin with. In particular, having both the regret updates and symmetrization helps most on the more challenging Walker tasks and the skill-specific tasks.}
\label{fig:abl-skills}
\end{figure}

\textbf{Multiagent Motivation.} Here we investigate whether it is sufficient to formulate the CuSP game with only a single learner. Rather than using regret defined as the difference in returns of two agents, we look at the difference in the returns of a single agent across two rollouts to identify if stochasticity in the policy alone is sufficient for providing the goal generator signal. As a case study we look at the Toss task in Figure \ref{fig:singletoss}, where we find the success rate increases much more slowly with a single agent. We hypothesize that a challenge with only using a single agent fro defining regret is that the regret landscape is much more flat -- i.e. the regret is more likely to be very small and not provide much signal to the goal generators, as the stochasticity of a single agent policy is comparatively lesser than two separate agents. As a result, the goals are less likely to be a useful curriculum for the agent.

\begin{figure}[H]
    \centering
    \includegraphics[scale=.3]{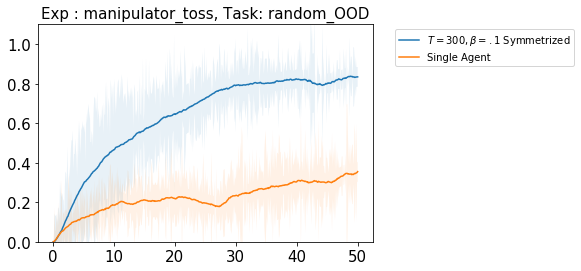}
    \includegraphics[scale=.3]{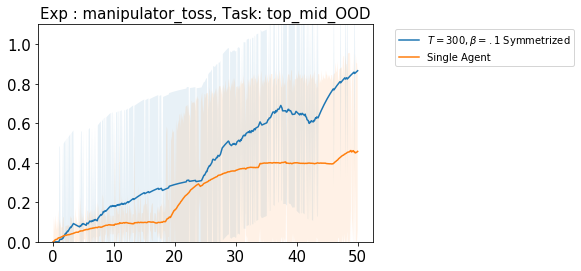}
    \caption{Comparison of CuSP with separately initialized and updated Alice and Bob, or a single Alice.  Success rates are averaged across 3 seeds.}
    \label{fig:singletoss}
\end{figure}


\subsubsection{ASP Ablations}\label{sec:asp-abl}

\begin{figure}[H]
\centering
\begin{subfigure}{.27\textwidth}
\includegraphics[width=\textwidth]{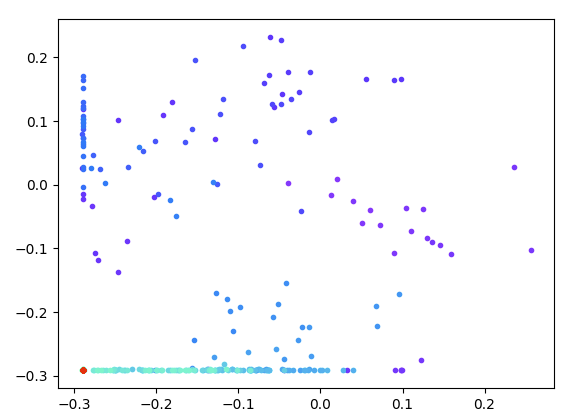}
\caption{ASP Sparse}
\end{subfigure}
\begin{subfigure}{.28\textwidth}
\includegraphics[width=\textwidth]{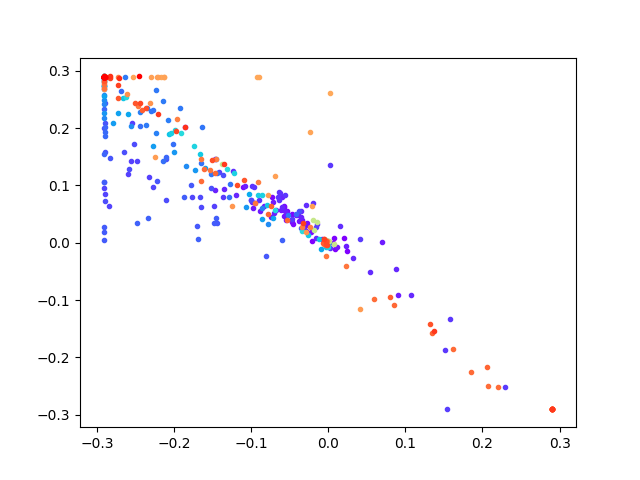}
\caption{ASP Dense}
\end{subfigure}
\begin{subfigure}{.27\textwidth}
\includegraphics[width=\textwidth]{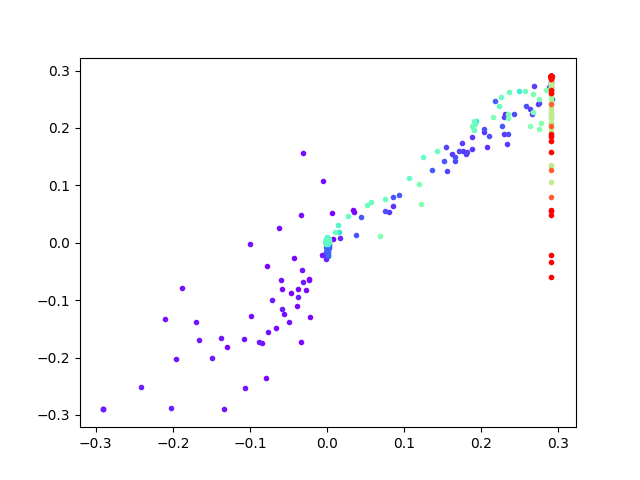}
\caption{ASP Dense with SAC Bob}
\end{subfigure}
\begin{subfigure}{.1\textwidth}
\includegraphics[width=.5\textwidth]{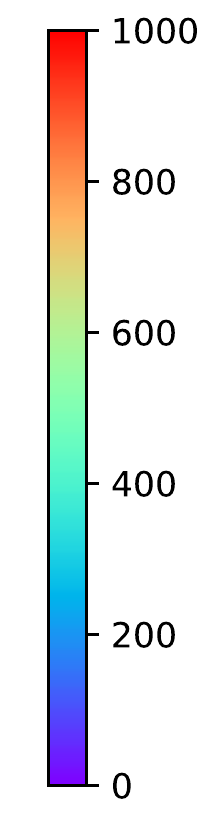}
\end{subfigure}

\caption{Progressive goal generation plots for the Point Mass environment after the first 1000 episodes, where each goal is the final position of Alice at the end of the episode and both agents are initialized at the center of the domain. We see that with the sparse version of ASP, the proposed goals converge on a single corner over time and gets stuck, whereas the dense implementation leads to more diverse goals.
}
\label{fig:asp-goalgen}
\end{figure}

\begin{figure}[H]
\centering
\begin{minipage}{.95\textwidth}
\hspace{-.3in}
\begin{tikzpicture}
  \node[xshift=-1cm] (img)  {\includegraphics[width=.5\linewidth]{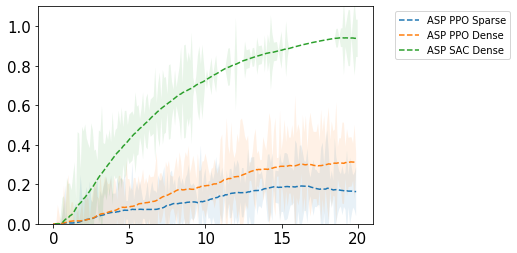}};
    \node[above=of img, yshift=-1.1cm, xshift=-1cm] {Point Mass};
    \node[left=of img, node distance=0cm, rotate=90,yshift=-.7cm,xshift=1.1cm] {Success Rate};
  \node[right=of img, xshift=-1cm] (img)  {\includegraphics[width=.5\linewidth]{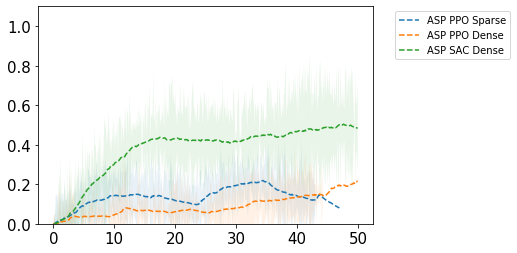}};
    \node[above=of img, yshift=-1.1cm, xshift=-1cm] {Point Mass Obstacles};

 \end{tikzpicture}
 \end{minipage}
 
 \begin{minipage}{.95\textwidth}
\hspace{-.3in}
\begin{tikzpicture}
  \node[xshift=-1cm] (img)  {\includegraphics[width=.5\linewidth]{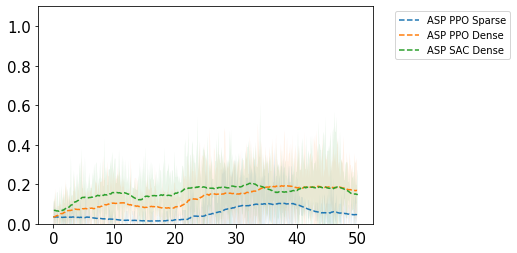}};
    \node[above=of img, yshift=-1.1cm, xshift=-1cm] {Reach};
    \node[left=of img, node distance=0cm, rotate=90,yshift=-.7cm,xshift=1.1cm] {Success Rate};
  \node[right=of img, xshift=-1cm] (img)  {\includegraphics[width=.5\linewidth]{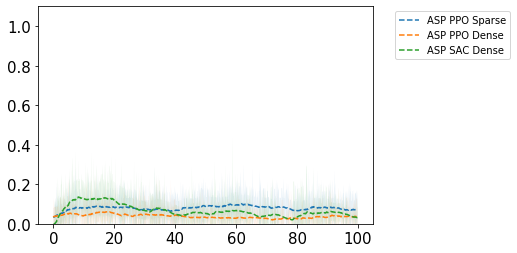}};
    \node[above=of img, yshift=-1.1cm, xshift=-1cm] {Toss};
    \node[below=of img, yshift=1.1cm, xshift=-3.5cm] (axes) {Rounds ($\times 100$)};

 \end{tikzpicture}
 \end{minipage}
 


\caption{Success rates across environments where ASP achieved $>0$ success rates on random goals sampled from $\mathcal{G}_{ood}$ across each of the ASP variants. For final results in the paper, we report ASP+BC (Dense, SAC). }
\label{fig:abl-asp}
\end{figure}

In the original implementation, ASP+BC uses PPO to optimize both Alice and Bob. We find this very suboptimal for the environment setups in this work.
For empirical validation, here we report results using 3 variations of ASP+BC:
\begin{itemize}
    \item ASP+BC (Sparse): Alice is rewarded +1 at the end of an episode of Bob does not succeed, or receives 0 reward otherwise. Both Alice and Bob are optimized using PPO.
    \item ASP+BC (Dense): Alice is rewarded negative Bob's reward at each transition. Both Alice and Bob are optimized using PPO.
    \item ASP+BC (Dense, SAC): Alice is rewarded negative Bob's reward at each transition. Alice is optimized with PPO while Bob is optimized with SAC.
\end{itemize}

For all cases, we use the behavioural cloning (BC) mechanism proposed in Section 3.2 of \citep{openai2021asymmetric}, where Bob is updated with the same clipped behavioural cloning loss on trajectories where it does not successfully achieve Alice's proposed goal.

The results are shown in Figure \ref{fig:abl-asp}.
We observe that with a sparse reward, Alice would tend to get stuck with proposing similar goals in a particular location. While Bob learns to succeed at that single goal, the lack of diversity causes Bob to perform poorly at generalization to novel goals in $\mathcal{G}_{ood}$, as seen in the evaluations. To overcome this, we incorporated a dense reward implementation and swapped to optimizing for Bob's policy using SAC. Our results presented in Section \ref{sec:exp} use this stronger baseline for comparison against CuSP.

\section{Theoretical Analysis}

Our overall framework contains 4 players: two goal-conditioned agents, Alice $\pi_A$ and Bob $\pi_B$, and their corresponding goal generators, $G_A$ and $G_B$. Further, let $\Pi$ and $\mathcal{G}$ denote the space of policies and goal generative models respectively. At any given round, let $g_A \sim G_A$ and $g_B \sim G_B$ denote the corresponding goals sampled from $G_A$ and $G_B$ respectively. Accordingly, we can define two regrets for the sampled goals:

\begin{align}
    \mathfrak{R}^{G_A} (g_A, \pi_A, \pi_B) = R(g_A, \pi_A) - R(g_A, \pi_B) \\
    \mathfrak{R}^{G_B} (g_B, \pi_B, \pi_A) = R(g_B, \pi_B) - R(g_B, \pi_A)
\end{align}

where $R(g, \pi_A)$ and $R(g, \pi_B)$ denote the empirical discounted sum of rewards for Alice and Bob respectively on a goal $g$. 
Now, we define the objective function $f$ below:
\begin{align}
   f(g_A, g_B, \pi_A, \pi_B) := \mathfrak{R}^{G_A}  (g_A, \pi_A, \pi_B) - \mathfrak{R}^{G_B}  (g_B, \pi_B, \pi_A). 
\end{align}

The overall objective function for CuSP can be written as:

\begin{align}\label{eq:cusp_obj}
   \min_{G_B \in \mathcal{G}, \pi_B \in \Pi} \max_{G_A\in \mathcal{G}, \pi_A \in \Pi} \mathbb{E}_{g_A \sim G_A, g_B \sim G_B} [f(g_A, g_B, \pi_A, \pi_B)] : = F(G_A, G_B, \pi_A, \pi_B).
\end{align}

To analyze the above objective, we define \textit{coordinating goal generator-solver teams} $(G, \pi)$ as a pair of goal generator $G \in \mathcal{G}$ and goal conditioned agent $\pi \in \Pi$.
Accordingly, we can interpret CuSP as a 2 player zero-sum game between two coordinating goal generator-solver teams $(G_A, \pi_A)$ and $(G_B, \pi_B)$ defined via goal-conditioned agents $\pi_A \in \Pi$ (Alice) and $\pi_B \in \Pi$ (Bob) along with their corresponding friendly goal generators $G_A \in \mathcal{G}$ and $G_B \in \mathcal{G}$ respectively.
\begin{definition}\label{def:cusp_game}
(CuSP game) The CuSP game is a 2 player zero-sum game between two coordinating goal generator-solver teams $(G_A, \pi_A)$ and $(G_B, \pi_B)$ defined via:
\begin{itemize}
    \item Alice $\pi_A \in \Pi$ and its friendly goal generator $G_A \in \mathcal{G}$ and  
    \item Bob $\pi_B \in \Pi$ and its friendly goal generator $G_B \in \mathcal{G}$.
\end{itemize}
The game is zero-sum with payoff function given as $F(G_A, G_B, \pi_A, \pi_B)$ for $(G_A, \pi_A)$, and as $-F(G_A, G_B, \pi_A, \pi_B)$ for $(G_B, \pi_B)$.
\end{definition}

Next, we state conditions under which a Nash Equilibrium exists for coordinating goal generator-solver agents in CuSP. The conditions depend highly on whether the game strategies (given by the product of goal generator space and policy space) are finite or continuous.

\begin{proposition} (Finite Games)
Let $(G_A, \pi_A)\in \mathcal{G} \times \Pi $ and $(G_B, \pi_B) \in \mathcal{G} \times \Pi $ be two coordinating goal generator-solver agents for CuSP defined over finite goal space $\mathcal{G}$  and policy space $\Pi$. Then, there exists a mixed strategy Nash Equilibrium for the CuSP game. 
\end{proposition}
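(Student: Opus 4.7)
The plan is to recognize the statement as essentially an application of Nash's existence theorem (1950) to the CuSP game, once that game has been properly cast as a finite two-player zero-sum game. First, I would unpack Definition \ref{def:cusp_game}: each coordinating team $(G_A, \pi_A)$ or $(G_B, \pi_B)$ is treated as a single player whose pure strategy is an element of the joint space $\mathcal{G} \times \Pi$. Because $\mathcal{G}$ and $\Pi$ are assumed finite, the product $\mathcal{G} \times \Pi$ is also finite, so each of the two players in the CuSP game has a finite pure strategy set.

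Next, I would verify the zero-sum structure. By construction the payoff to player 1 is $F(G_A,G_B,\pi_A,\pi_B)$ and the payoff to player 2 is $-F(G_A,G_B,\pi_A,\pi_B)$, which sum to zero for every pure strategy profile; the payoff is a real number on a finite domain and is therefore bounded. With the two players indexed and their pure strategy sets and payoffs in hand, the CuSP game satisfies the hypotheses of a finite, two-player, zero-sum normal-form game.

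At this point I would invoke Nash's theorem (or, equivalently, von Neumann's minimax theorem specialized to the two-player zero-sum setting), which states that every finite normal-form game admits at least one Nash equilibrium in mixed strategies. Applying it directly yields a pair of mixed strategies, i.e. distributions $\mu_A \in \Delta(\mathcal{G} \times \Pi)$ and $\mu_B \in \Delta(\mathcal{G} \times \Pi)$, such that neither team can improve its expected payoff under $F$ by unilaterally deviating. Unpacking these mixed strategies as joint distributions over goal generator-policy pairs for each team gives the claimed mixed strategy Nash equilibrium for the CuSP game.

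The only conceptual subtlety — not a genuine obstacle in the finite case, but the step most worth flagging — is that the equilibrium lives in $\Delta(\mathcal{G} \times \Pi)$ rather than in $\Delta(\mathcal{G}) \times \Delta(\pi)$, i.e. the goal generator and its friendly solver within a team may be correlated under the equilibrium mixed strategy. This is natural because the team is one player, but it should be stated explicitly so that the equilibrium notion is unambiguous. Aside from this clarification, the argument is a direct citation of Nash's existence theorem, with no approximation or fixed-point construction needed beyond what is packaged inside that theorem.
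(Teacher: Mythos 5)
Your proposal is correct and follows essentially the same route as the paper: observe that the product strategy space $\mathcal{G} \times \Pi$ is finite, cast the two coordinating teams as the two players of a finite zero-sum game, and invoke Nash's existence theorem. Your added remark that the equilibrium mixed strategies live in the simplex over the joint space $\mathcal{G} \times \Pi$ (allowing correlation within a team) is a useful clarification the paper leaves implicit, but it does not change the argument.
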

\begin{proof}
Since $\mathcal{G}$ and $\Pi$ are finite, we know that the product space $\mathcal{G} \times \Pi$ is also finite. The result then follows directly from Nash's Theorem~\citep{nash1951non}.
\end{proof}

\begin{proposition} (Continuous Games)
Let $(G_A, \pi_A)\in \mathcal{G} \times \Pi $ and $(G_B, \pi_B) \in \mathcal{G} \times \Pi $ be two coordinating goal generator-solver agents for CuSP defined over continuous goal space $\mathcal{G}$  and policy space $\Pi$.
Further, let $ \mathcal{G}$ and $\Pi$ be nonempty compact metric spaces and the payoff function $F$ (Eq.~\ref{eq:cusp_obj}) be continuous. 
Then, there exists a mixed strategy Nash Equilibrium for the CuSP game. 
\end{proposition}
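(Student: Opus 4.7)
The plan is to reduce this to a direct application of Glicksberg's generalization of Nash's theorem to continuous games, which states that any $n$-player game with nonempty compact Hausdorff strategy spaces and continuous payoff functions admits a mixed strategy Nash equilibrium. The CuSP game, as defined in Definition~\ref{def:cusp_game}, is a two-player zero-sum game where each player is a coordinating team whose strategy is a pair $(G, \pi) \in \mathcal{G} \times \Pi$, so the whole setup already closely matches Glicksberg's hypotheses.

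First I would identify the strategy space of each coordinating team as the product space $\mathcal{G} \times \Pi$, and note that since $\mathcal{G}$ and $\Pi$ are assumed to be nonempty compact metric spaces, the product $\mathcal{G} \times \Pi$ endowed with the product topology (e.g.\ the max or sum metric) is also a nonempty compact metric space, and in particular Hausdorff. This handles the strategy-space hypothesis of Glicksberg's theorem.

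Next I would verify the payoff hypothesis. The payoff for team $(G_A, \pi_A)$ is $F(G_A, G_B, \pi_A, \pi_B)$ and for team $(G_B, \pi_B)$ it is $-F(G_A, G_B, \pi_A, \pi_B)$, so the game is zero-sum. Continuity of $F$ on the product space $(\mathcal{G} \times \Pi)^2$ is given by hypothesis, and continuity of $-F$ is then immediate. With both teams having nonempty compact metric strategy spaces and continuous payoffs, Glicksberg's theorem applies directly and yields a mixed strategy Nash equilibrium, i.e.\ Borel probability measures $\mu_A^*$ and $\mu_B^*$ over $\mathcal{G} \times \Pi$ satisfying the standard best-response conditions with respect to the expected payoff
\[
\mathbb{E}_{(G_A, \pi_A) \sim \mu_A,\, (G_B, \pi_B) \sim \mu_B}[F(G_A, G_B, \pi_A, \pi_B)].
\]

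The main obstacle, and really the only nontrivial step, is justifying that the hypotheses of Glicksberg's theorem are the right ones to invoke rather than Nash's original theorem, since the latter only covers the finite case. I would either cite Glicksberg (1952) directly or, if a self-contained argument is preferred, briefly sketch the standard proof via a fixed-point argument: one shows that the space of Borel probability measures on $\mathcal{G} \times \Pi$ endowed with the weak-$*$ topology is a nonempty, compact, convex subset of a locally convex topological vector space (using the Riesz representation theorem together with compactness of $\mathcal{G} \times \Pi$), and that the best-response correspondence is nonempty-, convex-valued, and upper hemicontinuous (using continuity of $F$ and the Berge maximum theorem), so Kakutani--Fan--Glicksberg's fixed point theorem yields a fixed point, which is exactly the desired Nash equilibrium. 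No further assumptions on convexity of pure strategies or quasi-concavity of the payoff are needed because we work directly with mixed strategies.
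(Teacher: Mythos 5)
Your proposal is correct and follows essentially the same route as the paper: establish that the product strategy space $\mathcal{G} \times \Pi$ is nonempty and compact, use the assumed continuity of the zero-sum payoff $F$, and invoke Glicksberg's theorem to obtain a mixed strategy Nash equilibrium. The additional detail you supply (the Hausdorff check and the optional Kakutani--Fan--Glicksberg fixed-point sketch) is consistent with, and slightly more explicit than, the paper's two-line argument.
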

\begin{proof}
Since $\mathcal{G}$ and $\Pi$ are compact, we know by Tychonoff's theorem~\citep{tychonoff1930topologische,willard2012general} that the product space $\mathcal{G} \times \Pi$ is also compact. Combined with the fact that $F$ is assumed to be continuous, we can apply Glicksberg’s Theorem~\citep{glicksberg1952further} to finish the proof.
\end{proof}

Since we have established conditions for the existence of Nash equilibria for the CuSP game, we know from the minimax theorem that we can recover the Nash equilibirum solution by optimizing Eq.~\ref{eq:cusp_obj}. 

\textbf{Note 1:} In practice, our algorithms for optimizing Eq.~\ref{eq:cusp_obj} might not satisfy the necessary assumptions for the theoretical results. For example, we optimize the parameters of generators and policies specified as neural networks as opposed to optimizing in the space of functions directly.
Further, we consider a single-sample Monte Carlo estimate for the CuSP objective in Eq.~\ref{eq:cusp_obj} and optimize the goal generators and policy networks sequentially for a fixed number of gradient updates in every round using RL algorithms such as SAC.
While these protocols are standard practice even in related works, deriving theoretical guarantees in such scenarios is extremely challenging and an active area of theoretical research with many open questions.

\textbf{Note 2:} If $\pi_A = \pi_B$, then $r_A(g) = r_B(g)$ for all $g$, so all corresponding regrets will also be zero and thus give no signal to the goal generators. The key point to note here is that in practice, the goal generators have an exploration component that can avoid getting stuck, e.g., this can be achieved via $\epsilon$-greedy or via SAC which has an exploration bonus (as done in \name{}).  

Concretely, let $\pi_A = \pi_B = \pi_{rand}$ be randomly initialized agents. We claim that at initialization, $(G_A, \pi_{rand})$, $(G_B, \pi_{rand})$ will not be in a Nash Equilibrium except in the case where a random policy is already able to achieve all possible goals. 
To see why, suppose for the sake of contradiction that the agent teams $(G_A, \pi_{rand})$, $(G_B, \pi_{rand})$ are at a Nash Equilibrium. That is, the teams have the joint best responses to each other. Let $\mathbb{G}$ be the set of all possible goals and $\mathbb{G}' \subseteq \mathbb{G}$ be the set of goals that $\pi_{rand}$ can solve.
 
 
 
 If $\mathbb{G}' = \mathbb{G}$, then $\pi_A, \pi_B$ are already maximally competent and are therefore done learning. Note that this is also what we hope to converge to eventually during training, and the two policies matching can correspond to a Nash Equilibrium in such a case. At this point $\mathfrak{R} (g_A, \pi_A, \pi_B) = \mathfrak{R} (g_B, \pi_B, \pi_A) = 0$ for all $g_A, g_B$ so neither $G_A, G_B$ can increase their payoffs, and since $\pi_A, \pi_B$ are parameterized identically, if either policy is able to increase its payoff on some $g$ we can identically update the other policy such that $\pi_A = \pi_B$ until neither policy is able to improve its payoff further either.
 
 If $\mathbb{G}' \subset \mathbb{G}$, then for either team we can pick any single goal $g^* \in \mathbb{G} \setminus \mathbb{G}'$ and learn a policy $\pi^*$ capable of achieving $\{g^*\} \cup \mathbb{G}'$ using policy iteration (e.g., with greedy policy improvement assuming no forgetting of past goals). Such a $g^*$ will eventually be proposed as long as the goal generators have an exploratory component. This means there is a strategy $\pi^*$ that acquires equal utility to $\pi_{rand}$ on $\mathbb{G}'$ and higher utility on $g^*$. Thus there exists a strictly better strategy for either team $(G_A, \pi_{rand}), (G_B, \pi_{rand})$ by switching from $\pi_{rand}$ to $\pi^*$, which contradicts that the players are already at a Nash Equilibrium to begin with.

\end{document}